\begin{document}
\title{Stepsize Learning for Policy Gradient Methods in Contextual Markov Decision Processes}
\titlerunning{Stepsize Learning for Policy Gradient Methods}
%
\author{Luca Sabbioni\inst{*} \and Francesco Corda \and Marcello Restelli}
\authorrunning{L. Sabbioni et al.}
%
 \institute{Politecnico di Milano, Milan, Italy 
     \\ \email{\{luca.sabbioni, francesco.corda, marcello.restelli\}@polimi.it}
     }

\maketitle              
\begin{abstract}
Policy-based algorithms are among the most widely adopted techniques in model-free RL, thanks to their strong theoretical groundings and good properties in continuous action spaces. Unfortunately, these methods require precise and problem-specific hyperparameter tuning to achieve good performance, and tend to struggle when asked to accomplish a series of heterogeneous tasks. In particular, the selection of the step size has a crucial impact on their ability to learn a highly performing policy, affecting the speed and the stability of the training process, and often being the main culprit for poor results. In this paper, we tackle these issues with a Meta Reinforcement Learning approach, by introducing a new formulation, known as meta-MDP, that can be used to solve any hyperparameter selection problem in RL with contextual processes. After providing a theoretical Lipschitz bound to the difference of performance in different tasks, we adopt the proposed framework to train a batch RL algorithm to dynamically recommend the most adequate step size for different policies and tasks. In conclusion, we present an experimental campaign to show the advantages of selecting an adaptive learning rate in heterogeneous environments.
\end{abstract}

\section{Introduction}
Reinforcement Learning (RL, \cite{Suttonbook:1998}) is a field of Machine Learning aimed at building agents capable of learning a behavior that maximizes the amount of reward collected while interacting with an environment. Typically, this interaction is modeled as a Markov Decision Process (MDP, \cite{puterman2014markov}), where all trajectories share the same transition probability and reward function. Nevertheless, in many real-world scenarios, there may be exogenous variables that can affect the whole dynamics; one might think for example of a car race, where the road temperature or the tire choice may require different strategies.
One of the most successful streams of model-free RL applications adopts policy-based algorithms, which provide solid theoretical groundings and good empirical properties in continuous-action spaces.
Unfortunately, these methods require precise and problem-specific hyperparameter tuning to achieve good performance, causing them to struggle when applied to a series of heterogeneous tasks. The fundamental parameter to tune is the step size, which has a crucial impact on the ability to learn a performing policy, affecting the speed and the stability of the training process, and often being the main culprit for poor results.
Similarly, widely used optimizers (e.g. Adam \cite{kingma2014adam} and RMSProp \cite{tieleman2017divide}) and learning rate schedules have a narrow window of effective hyperparameters \cite{henderson2018did}.
In this work, we consider the specific problem of learning how to dynamically select the best step size for each policy in case the MDP process might differ due to exogenous variables, here denoted as ``tasks'' or ``contexts''. This framework is accurately described by the definition of a Contextual Markov Decision Process (CMDP) introduced in \cite{hallak2015contextual} (Section \ref{sec:pre}). 

Our first original contribution is the formalization of the Meta-RL problem, which we denoted as meta-MDP (Section \ref{sec:metaMDP}). This general framework allows to solve a set of RL tasks, grouped as a CMDP. We discuss the main elements of the model, such as the objective function, which is performance learning, and the meta action, consisting of the hyperparameter selection for a policy update. 
In this framework, we then add an assumption of Lipschitz continuity of the meta-MDPs, in which trajectories sampled from similar contexts are similar. This is a reasonable assumption for real-world problems, where a small change in the settings slightly changes the effects on the dynamics of the environment.
Under such conditions, it is possible to derive some guarantees on the Lipschitz continuity of the expected return and of its gradient (Section \ref{sec:lip_metaMDP}).
This is relevant, as it gives insight into the generalization capabilities of meta-RL approaches, where the performance of policies selected by observing tasks in training can be bounded for test tasks.
Subsequently, we propose in Section \ref{sec:metaFQI} to learn the step size of Policy Gradient methods in a meta-MDP. The idea of the approach is to apply a batch mode, value-based algorithm, known as Fitted Q-Iteration (FQI), to derive an estimate of the (meta) action-value function, based on the meta-features observed and of the hyperparameter selected. This approximation is used to dynamically recommend the most appropriate step size in the current scenario. The learning procedure is based on a regression through ExtraTrees \cite{geurts2006extremely}, which shows low sensitivity to the choice of its own parameters.
In conclusion, we evaluate our approach in various simulated environments shown in Section \ref{sec:experiments}, highlighting its strengths and current limitations.

\section{Related work}
The importance of hyperparameter tuning is widely known in the general Machine Learning field, because it can significantly improve the performance of a model \cite{henderson2018did,weerts2020importance,jastrzebski2020break}. Therefore, Hyperparameter Optimization (HO) is a paramount component of Automated Machine Learning (AutoML, \cite{hutter2019automated}) with a rich stream of research literature
\cite{parker2022automated}.\newline

The tuning process is usually approached by practitioners as a black-box approach: the most common methods are grid search or random search \cite{bergstra2012random}. More advanced methods are obtained by relying on sequential model-based Bayesian optimization \cite{hutter2011sequential,feurer2015initializing,snoek2012practical}, where a probabilistic model is trained to fit the underlying fitness function of the main learning algorithm. In some recent works \cite{eiben2007reinforcement,sehgal2019deep}, Genetic Algorithms are employed to automatically learn the most performing parameters on RL applications. The main limitation in this kind of approach consists of the need for complete learning instances to evaluate each hyperparameter, which is kept fixed throughout the whole process
A completely different solution consists in training an outer network, typically an RNN \cite{meier2018online,ravi2017opt,andrychowicz2016learning,im2021online} since it is often possible to compute the gradient of the objective function w.r.t. the hyperparameters through implicit differentiation, as shown in \cite{maclaurin2015gradient,lorraine2020optimizing}. These methods are often referred to as \textit{bilevel optimization} procedures, where the \textit{outer} loop updates the hyperparameters on a validation set, and the \textit{inner} one is used for training the models with a specific hyperparameter set.

Recent independent papers introduced the formal paradigm of Dynamic Algorithm Configuration and HO as a Sequential Decision Process \cite{adriaensen2022automated,biedenkapp2020dynamic,jomaa2019hyp}, albeit many other works developed solutions in this direction, employing RL-based methods \cite{zhu2019gradient,li2016learning,xu2017reinforcement,zoph2017neural} or contextual bandits \cite{li2017hyperband}.
However, these works are rarely adopted in RL, as they become computationally intractable and sample inefficient. Furthermore, gradient-based methods \cite{xu2018meta} compute the gradient of the return function with respect to the hyperparameters: they rely on a strong assumption that the update function must be differentiable and the gradient must be computed on the whole chain of training updates. In addition, these approaches are typically online, with limited exploration (as discussed in \cite{biedenkapp2020dynamic}), or make use of gradient-based meta-algorithms, where the high level of sensitivity to new meta-hyperparameters makes the problem even more challenging, as the models may be harder to train and require more data.
Within the specific task of learning rate tuning in a policy-gradient framework, \cite{Supratik2019Fast} proposed a sample efficient algorithm to learn a hyperparameter schedule employing a Weighted Importance Sampling approach, while \cite{paine2020hyperparameter} deals with the offline hyperparameter selection for offline RL. In these proposed approaches, HO is meant to optimize the objective function in the next step, similar to a bandit problem, which favors convergence to local optima. In order to optimize over a longer horizon, \cite{shala2020learning} adopts an RL approach to select the learning rate through Guided Policy Search. \newline

The concept of rapid adaptation to unseen tasks is usually denoted as meta-learning \cite{schmidhuber1987evolutionary} and has recently emerged as a fertile and promising research field, especially with regard to gradient-based techniques. One of the cornerstones in this area is MAML \cite{finn2017modelagnostic}, which learns a model initialization for fast adaptation and has been a starting point for several subsequent works \cite{nichol2018reptile,park2019metacurv}. PEARL \cite{rakelly2019efficient} decouples the problem of making an inference on the probabilistic context and solving it by conditioning the policy in meta Reinforcement Learning problems. However, all these works heavily rely on choosing (multiple) learning rates.

\section{Preliminaries}\label{sec:pre}
\sloppy A discrete-time Markov Decision Process (MDP) is defined as a tuple $\langle\Sspace,\Aspace, \mathcal{P}, \mathcal{R}, \gamma, \mu\rangle$, where $\Sspace$ is the (continuous) state space, $\Aspace$ the (continuous) action space, $\mathcal{P}(\cdot|s,a)$ is the Markovian transition, which assigns to each state-action pair $(s,a)$ the probability of reaching the next state $s'$, $\mathcal{R}$ is the reward function, bounded by hypothesis, i.e. $\sup_{s\in\Sspace, a\in\Aspace}|\mathcal{R}(s,a)|\leq\Rmax$ . Finally, $\gamma\in[0,1]$ is the discount factor, and $\mu$ is the initial state distribution. The policy of an agent, denoted as $\pi(\cdot|s)$, assigns to each state $s$ the density distribution over the action space $\Aspace$.

A trajectory $\tau\coloneqq (s_0, a_0, s_1, a_1, s_2, a_2, ..., a_{H-1}, s_H)$ is a sequence of state-action pairs, where $H$ is the horizon, which may be infinite. The return of a trajectory $\tau$ is defined as the discounted sum of the rewards collected: $G_\tau = \sum_{t=0}^H \gamma^t \mathcal{R}(s_t,a_t).$
Consequently, it is possible to define the expected return $j_\pi$ as the expected performance under policy $\pi$.
Similarly, we can define, for each state $s\in\Sspace$ and action $a\in\Aspace$, the (action)-value functions as:
\begin{align*}
    \small{Q_\pi(s,a) \coloneqq \EV_{\substack{s_{t+1}\sim \mathcal{P}(\cdot|s_{t},a_{t})\\a_{t+1}\sim\pi(\cdot|s_{t+1})}}}&\small{\left[\sum_{t=0}^\infty \gamma^t \mathcal{R}(s_t,a_t)|s,a\right]} \\ V_\pi(s)\coloneqq  \EV_{a\sim\pi(\cdot|s)}&[Q_\pi(s,a)].
\end{align*}
For the rest of the paper, we consider parametric policies, where the policy $\pi_{\vtheta}$ is parameterized by a vector $\vtheta\in \Theta \subseteq \mathbb{R}^m$. In this case, the goal is to find the optimal parametric policy that maximizes the performance, i.e. $\vtheta^* = \arg\max_{\vtheta\in\Theta}j(\vtheta)$\footnote{For the sake of brevity, when a variable depends on the policy $\pi_{\vtheta}$, in the superscript only $\vtheta$ is shown.}.
Policy-based algorithms adopt a gradient-ascent approach: the Policy Gradient Theorem (PGT)~\cite{Suttonbook:1998} states that, for a given policy $\pi_{\vtheta},\  \vtheta\in\Theta$:
\begin{equation}\label{eq:PGT}
    \nabla_{\vtheta} j(\vtheta) = \EV_{\substack{s\sim \delta_{\mu}^{\vtheta}\\a\sim\pi_{\vtheta}(\cdot|s)}}\bigg[\nabla_{\vtheta}\log\pi_{\vtheta}(a|s)Q_\pi(s,a)\bigg],
\end{equation}
where $\delta_{\mu}^{\vtheta}$ is the state occupancy measure induced by the policy, in such a way that $\delta_{\mu}^{\vtheta}(s)\coloneqq (1-\gamma)\int_\Sspace \mu(s_0)\sum_{t=0}^T\gamma^tp_{\vtheta}(s_0 \xrightarrow{t} s)\de s_0$, with $p_{\vtheta}(s_0 \xrightarrow{t} s)$ being the probability of reaching state $s$ from $s_0$ in $t$ steps following $\pi_{\vtheta}$.
In practice, the gradient in Equation \ref{eq:PGT} can be computed only through an estimator $\widehat{\nabla}_N j_{\vtheta}$, such as PGT \cite{sutton_pgt_2000}, that requires sampling a  batch of trajectories $\{\tau_i\}_{i=1}^N$.
A large family of algorithms is based on the Policy Optimization through Gradient Ascent, eventually with the inclusion of other methods, such as Trust Regions and constraints over the Kullback-Leibler divergence of the policies between consecutive iterations \cite{schulman2015trust,schulman2017proximal}. An important variation on the approach consists in following the steepest ascent direction using the Natural Policy Gradient \cite{kakade2001natural}, which includes information regarding the curvature of the return manifold over the policy space in the form of the Fisher Information Matrix $F(\vtheta)=\EV[\nabla_{\vtheta}\log\pi_{\vtheta}\nabla^\top_{\vtheta}\log\pi_{\vtheta}]$; its inverse is then multiplied by the gradient to obtain the natural gradient $g(\vtheta)\coloneqq F(\vtheta)^{-1}\nabla_{\vtheta}j(\vtheta)$, independent of the policy parameterization. A common approach to avoid long computational times for large policy spaces is to directly provide an estimate of the natural gradient $\widehat{g}_N(\vtheta)$ by using the same batch of trajectories adopted for the gradient estimation, and through the iteration of $k$ steps of conjugate gradient methods with the application of the Fisher-vector products \cite{schulman2015trust}. 

\vspace{0.15cm}
\textit{Lipschitz MDP}
This subsection introduces the concepts of Lipschitz Continuity (LC) and Lipschitz MDP. The notation is taken from \cite{pirotta2015policy}. 
Let $(\mathcal{X}, d_{\mathcal{X}})$ and $(\mathcal{Y}, d_{\mathcal{Y}})$ be two metric spaces; a function $f : \mathcal{X} \rightarrow \mathcal{Y}$ is called $L_f$-Lipschitz continuous ($L_f$-LC), with $L_f \ge 0$, if $d_{\mathcal{Y}}(f(x),f(x')) \le L_f d_{\mathcal{X}} (x,x') \forall x,x' \in \mathcal{X}$. Furthermore, we define the Lipschitz semi-norm as $\|f\|_L = \sup_{x,x' \in \mathcal{X} : x\neq x'} \frac{d_{\mathcal{Y}}(f(x),f(x'))}{d_{\mathcal{X}} (x,x')}$. For real functions, the usual metric is the Euclidean distance while, for distributions, a common metric is the Kantorovich, or $L^1$-Wasserstein distance:
$$\mathcal{K}(p,q)\coloneqq \sup_{f: \|f\|_L\le 1}\bigg\{\big{\|}\int_X f d(p-q)\big{\|}\bigg\}$$
~\cite{rachelson2010locality,pirotta2015policy} introduced some notion of smoothness in RL by defining the Lipschitz-MDP and the Lipschitz policies:
\begin{ass}\label{ass:LipMDP}
	Let $\mathcal{M}$ be an MDP. $\mathcal{M}$ is called $(L_{P},L_r)$-LC if for all $(s,a),(\overline{s},\overline{a}) \in \SAs$:
	\begin{align*}
		& \mathcal{K} \left( P (\cdot|s,a), P(\cdot|\overline{s},\overline{a}) \right) \le L_{P} \,  d_{\SAs} \left((s,a), (\overline{s},\overline{a}) \right), \\
		& \left| r(s,a) - r(\overline{s},\overline{a}) \right| \le L_r \,  d_{\SAs} \left((s,a), (\overline{s},\overline{a}) \right).
	\end{align*}
\end{ass}
\begin{ass}\label{ass:Lip_Policy}
	Let $\pi \in \Pi$ be a Markovian stationary policy. $\pi$ is called $L_{\pi}$-LC if for all $s,\overline{s} \in \Ss$:
	\begin{align*}
		& \mathcal{K} \left( \pi (\cdot|s), \pi(\cdot|\overline{s}) \right) \le L_{\pi} \,  d_{\Ss} \left(s,\overline{s} \right),
	\end{align*}
\end{ass}
Since we are dealing with parametric policies, often other useful assumptions rely on the Lipschitz continuity \wrt the policy parameters $\vtheta$ and their gradient.
In \cite{pirotta2015policy}, it is shown that, under these Lipschitz continuity assumptions on the MDP and the policy model, also the expected return, the $Q$-function, and the gradient components are Lipschitz w.r.t. $\vtheta$.\footnote{By assuming that the policy and its gradient is LC w.r.t. $\vtheta$.}

\textbf{Meta Reinforcement Learning.} As the name suggests, meta-learning implies a higher level of abstraction than regular machine learning. In particular, meta reinforcement learning (meta-RL) consists in applying meta-learning techniques to RL tasks. Usually, these tasks are formalized in MDPs by a common set of parameters, known as the \textit{context} $\vomega$.
The natural candidate to represent the set of RL tasks is the Contextual Markov Decision Process (CMDP, \cite{hallak2015contextual}), defined as a tuple $(\Omega, \mathcal{S}, \mathcal{A}, \mathcal{M}(\vomega))$ where $\Omega$ is called the context space, $\mathcal{S}$ and $\mathcal{A}$ are the shared state and action spaces, and $\mathcal{M}$ is the function that maps any context $\vomega \in \Omega$ to an MDP, such that $\mathcal{M}(\vomega)=\langle\mathcal{S}, \mathcal{A}, P_{\vomega}, R_{\vomega}, \gamma_{\vomega}, \mu_{\vomega}\rangle$.
In other words, a CMDP includes in a single entity a group of tasks. In the following, we will assume that $\gamma$ and $\mu$ are shared, too.

\section{Meta-MDP}\label{sec:metaMDP}
We now present the concept of meta-MDP, a framework for solving meta-RL tasks that extends the CMDP definition to include the learning model and the policy parameterization. Similar approaches can be found in \cite{garcia2019metamdp} and in \cite{li2016learning}.
To start, let's consider the various tasks used in a meta-training procedure as a set of MDPs $\left\{\mathcal{M}_{\vomega}\right\}_{\vomega \in \Omega}$, such that each task $\mathcal{M}_{\vomega}$ can be sampled from the distribution $\psi$ defined on the context space $\Omega$. This set can be seen equivalently as a CMDP $\mathscr{M}=\langle\Omega, \Sspace,\Aspace, \mathcal{M}(\vomega)\rangle$, where $\mathcal{M}(\vomega)=\mathcal{M}_{\vomega}$.  Similarly, we define a distribution $\rho$ over the policy space $\mathbf{\Theta}$, so that at each iteration in an MDP $\mathcal{M}_{\vomega}$, the policy parameters $\vtheta_0$ are initialized to a value sampled from $\rho$. In our case, we assume to be able to represent the task by the parameterized context itself $\vomega$.
\begin{defi}\label{def:metaMDP}
A meta-MDP is defined as a tuple $\langle\mathcal{X}, \mathcal{H}, \mathcal{L}, \widetilde{\gamma},(\mathscr{M}, \psi),(\boldsymbol{\Theta}, \rho), f\rangle$,
where:
\begin{itemize}\setlength\itemsep{.1mm}
    \item $\mathcal{X}$ and $\mathcal{H}$ are respectively the meta observation space and the learning action space;
    \item $\mathcal{L}:\vTheta \times \Omega\times\mathcal{H}\rightarrow \mathbb{R}$ is the meta reward function;
    \item $\widetilde{\gamma}$ is the meta-discount factor;
    \item $(\mathscr{M}, \psi)$ and $(\boldsymbol{\Theta}, \rho)$ contain respectively a CMDP $\mathscr{M}$ with distribution over tasks $\psi$, and the policy space $\boldsymbol{\Theta}$, with initial distribution $\rho$;
    \item $f$ is the update rule of the learning model chosen.
\end{itemize}
\end{defi}
In particular, a meta-MDP attempts to enclose the general elements needed to learn an RL task into a model with properties similar to a classic MDP. 
The meta observation space $\mathcal{X}$ of a meta-MDP can be considered as the generalization of the observation space in classic Partially-Observable MDPs (POMDP) \cite{aastrom1965optimal}, and it is meant to include information regarding the current condition of the learning process, and it is (eventually implicitly) dependent on $\vtheta$ and on the context $\vomega$. 

Each action $h_k \in \mathcal{H}$ performed on the meta-MDP with policy parametrization $\vtheta_k$ at the $k$-th step, determines a specific hyperparameter that regulates the stochastic update rule $f$, i.e., $\boldsymbol{\theta}_{k+1}=f\left(\boldsymbol{\theta}_{k}, h_{k}, \tau_k\right)$, where $\tau_k$ is the current batch of trajectories.
In general, we can consider any update function with a set of tunable hyperparameters; in particular, in this work we focus on (Normalized) Natural Gradient Ascent (NGA), in which the action $h$ determines the step size, and the update rule takes the form $f(\boldsymbol{\theta}, h)=\boldsymbol{\theta}+h \frac{\widehat{g}_N (\boldsymbol{\theta},\vomega)}{\|\widehat{g}_N (\boldsymbol{\theta},\vomega)\|_2}$, where $\widehat{g}_N(\vtheta,\vomega)$ is the natural gradient of a policy $\vtheta$ estimated on $N$ episodes through the task $\mathcal{M}_{\vomega}$.

As in a standard RL problem, the training of a meta-MDP is accomplished by optimizing a reward function. Meta-Learning has the main goal of learning to learn: as a consequence, we want to consider performance improvement as our reward. To accelerate the learning over the current MDP $\mathcal{M}_{\vomega}$, this function should reflect variations between the returns obtained in different learning steps. To accomplish this, we define $\mathcal{L}(\boldsymbol{\theta},\vomega, h)$ as a function of the current policy parameters $\vtheta$ and of the meta-action $h$ once the context $\vomega$ is fixed:
\begin{equation*}
    \mathcal{L}(\boldsymbol{\theta}, \vomega, h):=j_{\vomega}(f(\boldsymbol{\theta}, h))-j_{\vomega}(\boldsymbol{\theta});
\end{equation*}
where $j_{\vomega}(\boldsymbol{\theta})$ and $j_{\vomega}(f(\boldsymbol{\theta}, h))$ are respectively the expected returns in the task $\mathcal{M}_{\vomega}$ before and after one update step according to the function $f$, estimated through a batch of sampled trajectories.
In the particular case of NGA, the function takes the following form:
\begin{equation*}
    \mathcal{L}(\boldsymbol{\theta},\vomega, h)=j_{\vomega}\left(\boldsymbol{\theta}+h \frac{\widehat{g}_N (\boldsymbol{\theta},\vomega)}{\|\widehat{g}_N (\boldsymbol{\theta},\vomega)\|_2}\right)-j_{\vomega}(\boldsymbol{\theta}).
\end{equation*}
Unlike a standard MDP, a meta-MDP does not include a Markovian transition model that regulates its dynamics: given $x_k\in\mathcal{X}$, the stochastic transition to the next meta-state $x_{k+1}$ is induced by the distribution of the trajectories induced by the pair $(\theta_k, \mathcal{M}_{\vomega})$ and on the update rule $f$. The initial state hence implicitly depends on $\psi$ and $\rho$, and the transition to the next state is still Markovian, as it is independent of the previous states observed (once $x_k$ is known).
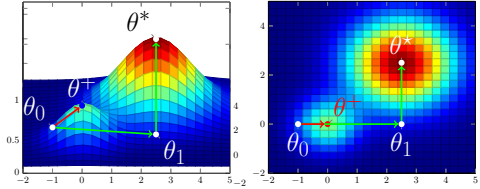
\begin{figure}[t]
    \centering
    \def\centerx{2.5}
\def\centery{2.5}
\begin{tikzpicture}[scale=0.4]
\begin{axis}[view={0}{45}, name=plot1, pin distance=.6mm]
\addplot3[surf,domain=-2:5,domain y=-2:5,  colormap/jet]
{exp(-( (x-\centerx)^2 + (y-\centery)^2)/3 )+0.5*exp(-( (x)^2 + (y)^2)/1 )};
\node[circle,minimum size=0.2cm,inner sep=1pt,fill=white,pin=111:\Huge{$\theta^*$}]
at (axis cs:\centerx,\centery,1) (D) {};
\node[circle,minimum size=0.2cm,inner sep=1pt,fill=white,pin=130:\Huge{\textcolor{white}{$\theta_0$}}]
at (axis cs:-1,0,0.197) (A) {};
\node[circle,minimum size=0.2cm,inner sep=1pt,fill=blue,pin=91.5:\Huge{\textcolor{white}{$\theta^+$}}]
at (axis cs:0,0,0.5) (B) {};
\node[circle,minimum size=0.2cm,inner sep=1pt,fill=white,pin=320:\Huge{\textcolor{white}{\textbf{$\theta_1$}}}]
at (axis cs:2.5,0,0.1) (C) {};
\draw[->,draw=red, ultra thick] (A) -- (B);
\draw[->,draw=green,ultra thick] (A) -- (C);
\draw[->,draw=green,ultra thick] (C) -- (D);
\end{axis}
\begin{axis}[view={0}{90},pin distance=0.6mm,  name=plot2,
    at=(plot1.right of south east), anchor=left of south west]
    \addplot3[surf,domain=-2:5,domain y=-2:5, colormap/jet]
    {exp(-( (x-\centerx)^2 + (y-\centery)^2)/3 )+0.5*exp(-( (x)^2 + (y)^2)/1 )};
    \node[circle,minimum size=0.2cm,inner sep=1pt,fill=white,pin=90:\Huge{\textcolor{white}{$\theta^\star$}}]
    at (axis cs:\centerx,\centery,1) (D) {};
    \node[circle,minimum size=0.2cm, inner sep=1pt,fill=white,pin=270:\Huge{\textcolor{white}{$\theta_0$}}]
    at (axis cs:-1,0,0.197) (A) {};
    \node[circle,minimum size=0.2cm,inner sep=1pt,fill=red,pin=60:\Huge{\textbf{\textcolor{red}{$\theta^+$}}}]
    at (axis cs:0,0,0.5) (B) {};
    \node[circle,minimum size=0.2cm,inner sep=1pt,fill=white,pin=270:\Huge{\textcolor{white}{$\theta_1$}}]
    at (axis cs:2.5,0,0.1) (C) {};
    \draw[->,draw=green,ultra thick] (A) -- (C);
    \draw[->,draw=green,ultra thick] (C) -- (D);
    \draw[->,draw=red, ultra thick] (A) -- (B);
    \end{axis}
\end{tikzpicture}
\caption{Example of an optimization setting where a Bandit approach would be suboptimal: starting from $\theta_0$, the optimal bandit agent will choose to reach $\theta^{+}$, a local maximum. An RL agent, however, may plan to make a larger step, up to $\theta_1$, to reach the global optimum $\theta^\star$ on the next update.}
\label{fig:example}
\end{figure}

\vspace{0.15cm}
\textit{Discount factor, contextual bandit, and meta-MDP.}
The choice of the meta-discount factor $\widetilde{\gamma}$ is critical: meta-learning is very often considered as paired with \emph{few-shot learning}, where a short horizon is taken into account for the learning process. $\widetilde{\gamma}$, if lower than 1,  explicitly translates into an effective horizon of $\frac{1}{1-\widetilde{\gamma}}$. However, a myopic behavior induced by a low discount factor might lead the meta-agent to prefer actions leading to local optima, while sometimes it might be necessary to take more cautious steps to reach the global optima of the learning process. Setting $\widetilde{\gamma}=0$, the problem degenerates into a contextual bandit, where the goal is to maximize the immediate reward, in a similar fashion as in \cite{Supratik2019Fast}. However, it might be inefficient to directly maximize the immediate reward, as an agent might prefer to choose a different hyperparameter to reach the global optimum, which is possibly unreachable in just one step. 
Figure \ref{fig:example} provides an example in this direction, where a bi-dimensional parametrization is considered: starting from the initial parametrization $\vtheta_0$, the maximization of the immediate return would lead to a local optimum $\vtheta^+$. We want our agent to be able to plan the updates to maximize the final policy's performance: this is the main reason for the design of HO as a sequential decision-making problem.  

\vspace{0.15cm}
\textit{Meta-Space features:} In this subsection, we deal with the choice of the features observed in the meta-observation $x_t$. Some properties are generally desirable for its formulation: first of all, it needs to include policy-specific information, as some form of knowledge about the current policy is necessary to adapt the meta-actions to the current setting of the model. Ideally, we can include all parameters of the current policy $\vtheta_t$, even if this approach might be difficult for large policy spaces. Finding an informative set of meta-features remains an open problem for future research, as recalled in Section \ref{sec:conc}.
Additionally, task-specific features may be informative. The information about the task $\vomega$ is used to achieve an \textit{implicit task-identification}, a necessary step to optimize learning in new tasks, based on similarities to older ones. 
Finally, some relevant information could be included in the (natural) gradient $\widehat{g}_N(\theta_t,\vomega)$: this vector is implicitly dependent on the stochasticity of the inner MDP $\mathcal{M}_{\vomega}$ under policy $\vtheta_t$ according to the batch of trajectories sampled for its estimation.
In our experiments, we will consider the concatenation of all these features $x_t = \langle \vtheta_t, \widehat{g}_N(\theta_t, \vomega), \vomega \rangle$.
From a more technical point of view, a Meta-MDP can be considered as the conditional observation probability of a POMDP, where the true state consists of the pair $(\theta_t, \vomega)$, and the meta-observation $x_t$ relies on a conditional observation probability $\mathcal{O}(\cdot|\theta_t,\vomega)$.

\section{Context Lipschitz Continuity}\label{sec:lip_metaMDP}
We consider a meta-MDP in which all inner tasks satisfy the Lipschitz continuity assumption. Under this condition, we can derive a set of bounds on the approximation errors obtained by the meta-agent when acting on unseen tasks. Among others, we obtain that the expected return $j_{\vomega}(\boldsymbol{\theta})$ and its gradient are LC w.r.t. the context $\vomega$, providing useful theoretical foundations for the meta-RL general framework and inspiring motivation to look for solutions and models capable of generalizing on large task spaces. Let's suppose to be provided with a CMDP $(\Omega, \Sspace, \Aspace, \mathcal{M})$, such that Assumption \ref{ass:LipMDP} is verified $\forall\vomega\in\Omega$, meaning that $\forall\vomega\in\Omega$ the MDP $\mathcal{M}_{\vomega}$ is $(L_P(\vomega)-L_r(\vomega))$-LC.
Let us also assume that the set of MDPs is LC in the context $\vomega$:
\begin{ass}\label{as:lip_metamdp}
Let $\mathscr{M}$ be a CMDP. $\mathscr{M}$ is called $(L_{\omega_P},L_{\omega_r})$-Context Lipschitz Continuous ($(L_{\omega_P},L_{\omega_r})$-CLC) if for all $(s,a),(\overline{s},\overline{a}) \in \SAs$, $\forall \vomega, \widehat{\vomega}\in\Omega$:
\begin{align*}
    \left.\mathcal{K}\left(P_{\vomega}(\cdot \mid s, a), P_{\widehat{\vomega}}(\cdot \mid s, a)\right)\right)  &\leq L_{\omega_{P}} d_{\Omega}(\vomega, \widehat{\vomega}) \\
    \Bigl|R_{\vomega}(s, a)-R_{\widehat{\vomega}}(s, a)\Bigr|  &\leq L_{\omega_{r}} d_{\Omega}(\vomega, \widehat{\vomega}).
\end{align*}
\end{ass}
This means we have some notion of task smoothness: when two MDPs with similar contexts are considered, their transition and reward processes are similar.
These assumptions, along with Assumption \ref{ass:Lip_Policy}, allow us to infer some considerations regarding the Q-value function: 
\begin{restatable}[]{thm}{lipschitzQ}\label{thm:lip_q_omega}
Let $\mathscr{M}$ be a $(L_{\omega_P},L_{\omega_r})$-CLC CMDP for which $\mathcal{M}(\vomega)$ is $\left(L_{P}(\vomega), L_{r}(\vomega)\right)$-LC $\forall \vomega \in \Omega$. Given a $L_{\pi}$-LC policy $\pi$, the action value function $Q_{\omega}^{\pi}(s, a)$ is $L_{\omega_{Q}}$-CLC w.r.t. the context $\vomega$, i.e., $\forall (s,a)\in\Sspace\times\Aspace$:
\begin{equation*}\label{eq:Q_CLC}
    \Bigl|Q_{\vomega}^{\pi}(s, a)-Q_{\widehat{\vomega}}^{\pi}(s, a)\Bigr| \leq L_{\omega_{Q}}(\pi) d_{\Omega}(\vomega, \widehat{\vomega}),
\end{equation*}
where
\begin{equation}\label{eq:L_Qw}
\begin{aligned}
    L_{\omega_{Q}}(\pi)&=\frac{L_{\omega_{r}}+\gamma L_{\omega_{p}} L_{V_{\pi}}(\vomega)}{1-\gamma},\\
    L_{V_{\pi}}(\vomega)&=\frac{L_r(\omega)(1+L_\pi)}{1-\gamma L_P(\vomega)(1+L_\pi)}
    \end{aligned}
\end{equation}
\end{restatable}

As a consequence, the return function $j_{\vomega}(\pi)$ is context-LC: $\left|j_{\vomega}(\pi)-j_{\widehat{\vomega}}(\pi)\right|\leq L_{\omega_{Q}}(\pi) d_{\Omega}(\vomega, \widehat{\vomega}).$
In simpler terms, Theorem \ref{thm:lip_q_omega} exploits the LC property to derive an upper bound on the return distance in different tasks. This result represents an important guarantee on the generalization capabilities of the approach, as it provides a boundary on the error obtained in testing unseen tasks.
A proof for this theorem is provided in the supplementary material, where we also prove that the analytic gradient $\nabla j_{ \vomega}^{\boldsymbol{\theta}}$ is CLC w.r.t. the context, too. In particular, a bound on the distance between the gradients of different tasks ensures regularity in the surface of the return function, which is important as the gradient is included in the meta state to capture information regarding the context space.

\begin{algorithm}[t]
\caption{Meta-MDP Dataset Generation for NGA (trajectory method)}
\label{alg:FQIdata}
	\begin{algorithmic}
		\STATE \textbf{Input:} CMDP $\mathscr{M}$,  task distribution $\psi$, policy space $\vTheta$, initial policy distribution $\rho$, \\ number of meta episodes $K$, learning steps $T$, inner trajectories $N$.
		\STATE \textbf{Initialize:} $\mathcal{F} =\{\}$,
		\FOR{$k=1,\dots,K$}
			\STATE Sample context $\vomega\sim \psi(\Omega)$, initial policy $\vtheta_0\sim\rho(\vTheta)$
			\STATE Sample $n$ trajectories in task $\mathcal{M}_{\vomega}$ under policy $\pi(\vtheta_0)$
			\STATE Estimate $j_{\vomega}(\vtheta_0)$, $\widehat{g}_N(\vtheta_0, \vomega)$
			\FOR{$t=0,\dots,T-1$}
			    \STATE Sample meta-action $h\in\mathcal{H}$
			    \STATE Update policy $\vtheta_{t+1}= \vtheta_t + h\frac{\widehat{g}_N(\vtheta_t, \vomega)}{\|\widehat{g}_N(\vtheta_t, \vomega)\|}$
			    \STATE Sample $n$ trajectories in $(\mathcal{M}_{\vomega},\pi(\vtheta_t))$
			    \STATE Estimate  $j_{\vomega}(\vtheta_{t+1})$, $\widehat{g}_N(\vtheta_{t+1}, \vomega)$
			    \STATE Set $x=\langle \vtheta_t, \widehat{g}_N(\vtheta_t,\vomega), \vomega\rangle$; \ \ $x'=\langle \vtheta_{t+1}, \widehat{g}_N(\vtheta_{t+1},\vomega), \vomega\rangle$; \ \  $ l=j_{\vomega}(\vtheta_{t+1})-j_{\vomega}(\vtheta_{t}).$
			    \STATE Append $\{(x,h,x',l)\}$ to $\mathcal{F}$
			\ENDFOR
		\ENDFOR
		\STATE \textbf{Output: } $\mathcal{F}$
	\end{algorithmic}
\end{algorithm}

\section{Fitted Q-Iteration on Meta-MDP}\label{sec:metaFQI}
We now define our approach to learn a dynamic learning rate in the framework of a meta-MDP. As a meta-RL approach, the objectives of our algorithm are to improve the generalization capabilities of PG methods and to remove the need to manually tune the learning rate for each task. Finding an optimal dynamic step size serves two purposes: it maximizes the convergence speed by performing large updates when allowed and improves the overall training stability by selecting low values when the return is close to the optimum or the current region is uncertain. 
To accomplish these goals, we propose the adoption of the Fitted Q-Iteration (FQI,\cite{ernst2005tree}) algorithm, which is an off-policy, and offline algorithm designed to learn a good approximation of the optimal action-value function by exploiting the Bellman optimality operator.
The approach consists in applying Supervised Learning techniques as, in our case, Extra Trees~\cite{geurts2006extremely}, in order to generalize the $Q$ estimation over the entire state-action space. The algorithm considers a full dataset $\mathcal{F} = \{(x_t^k,h_t^k,l_{t}^k,x_{t+1}^k)\}_k$, where each tuple represents an interaction with the meta-MDP: in the $k-$th tuple, $x^k_t$ and $x^k_{t+1}$ are respectively the current and next meta-state, $h^k_t$ the meta-action and $l^k_t$ the meta reward function, as described in Section \ref{sec:metaMDP}. To consider each meta-state $x$, there is the need to sample $n$ trajectories in the inner MDP to estimate return and gradient.
At the iteration $N$ of the algorithm, given the (meta) action-value function $Q_{N-1}$, the training set $TS_N =\{(i^k, o^k)\}_{k}$ is built, where each input is equivalent to the state-action pair $i^k = (x_t^k, h_t^k)$, and the target is the result of the Bellman optimal operator: $o^k = l_{t}^k + \widetilde{\gamma} \max_{h\in\mathcal{H}}Q_{N-1}(x_{t+1}^k, h)$. In this way, the regression algorithm adopted is trained on $TS$ to learn $Q_N$ with the learning horizon increased by one step. 

In general, the dataset is created by following $K$ learning trajectories over the CMDP: at the beginning of each meta-episode, a new context $\vomega$ and initial policy $\vtheta_0$ are sampled from $\psi$ and $\rho$; then, for each of the $T$ learning steps, the meta action $h$ is randomly sampled to perform the policy update. In this way, the overall dataset is composed of $KT$ tuples.
It is also possible to explore the overall task-policy space $\Omega\times\boldsymbol{\Theta}$ through a generative approach: instead of following the learning trajectories, both $\vomega, \vtheta_0$ and $h$ are sampled every time. 
We refer to this method as ``generative'' approach, while the former will be referred to as ``trajectory'' approach.
The pseudo-code for the dataset generation process with trajectories is provided in Algorithm \ref{alg:FQIdata}.

\vspace{0.15cm}
\textit{Double Clipped Q Function}\label{ssec:double}
As mentioned, each FQI iteration approximates the action-value function using the estimates made in the previous step. 
As the process goes on, the sequence of these compounding approximations can degrade the overall performance of the algorithm. In particular, FQI tends to suffer from overestimation bias, similarly to other value-based approaches that rely on taking the maximum of a noisy $Q$ function.
To countermeasure this tendency, we adopt a modified version of Clipped Double Q-learning, introduced by \cite{fujimoto2019offpolicy}, to penalize uncertainties over future states. This approach consists in maintaining two parallel functions $Q_N^{\{1,2\}}$ for each iteration and choosing the action $h$ maximizing a convex combination of the minimum and the maximum between them:
$$
l+\tilde{\gamma} \max _{h\in\mathcal{H}}\left[\lambda \min _{j=1,2} Q^j\left(x^{\prime}, h\right)+(1-\lambda) \max _{j=1,2} Q^j\left(x^{\prime}, h\right)\right],
$$
with $\lambda>0.5$. If we set $\lambda=1$, the update corresponds to Clipped Double Q-learning. The minimum operator penalizes high variance estimates in regions of uncertainty and pushes the policy towards actions that lead to states already seen in the dataset.

The overall procedure introduces external hyperparameters,e.g. the number of decision trees, the minimum number of samples for a split (\textit{min split}), and $\lambda$. However, the sensitivity on these parameters is minimal \cite{geurts2006extremely}, as a different set of hyperparameters does not impact the ability of FQI to converge.

\begin{figure*}[t]
	\includegraphics[width=.99\textwidth]{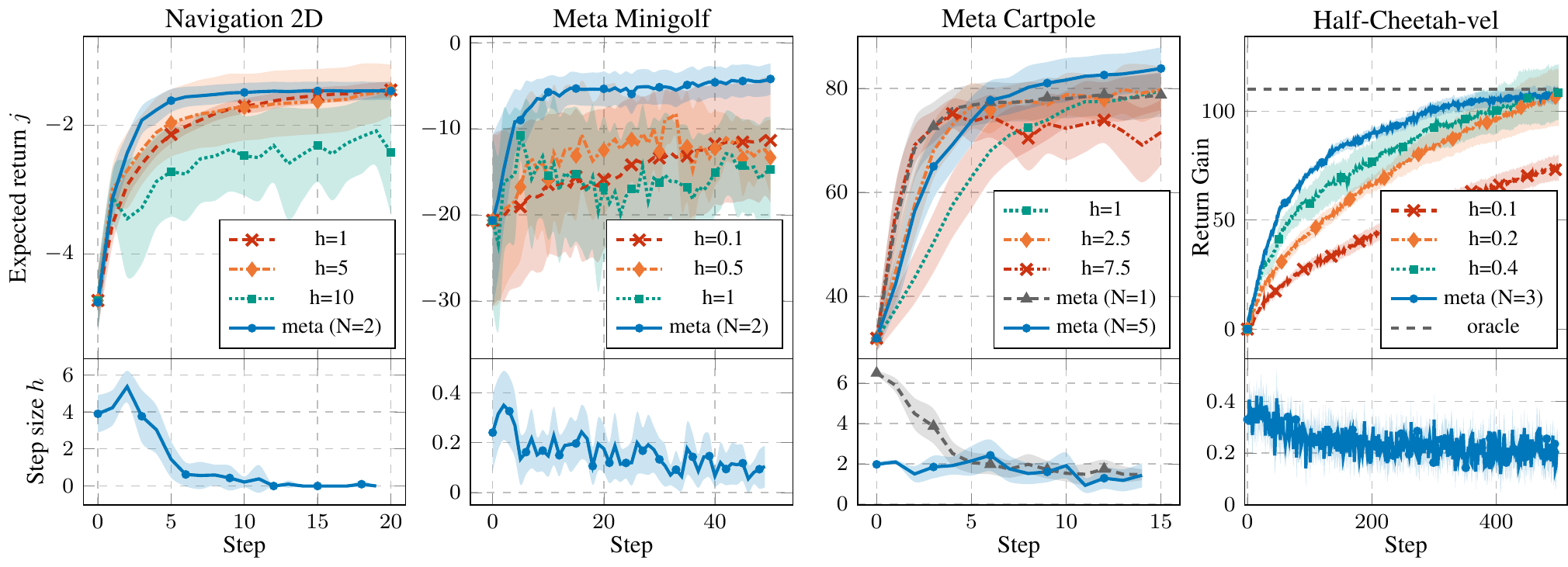}
	\caption{FQI model performance against NGA with fixed step size $h$. The top plots show the expected returns or the return gain. The bottom plots show the meta actions chosen through learning iterations. $N$ represents the FQI iteration selected. (20 runs/random test contexts, avg $\pm$ 95 \% c.i.)}
	\label{fig:learning}
\end{figure*}


\section{Experimental Evaluation} \label{sec:experiments}

In this section, we show an empirical analysis of the performance of our approach in different environments. As we shall see, the meta action can choose the best step size and dynamically adapt it to fine-tune the learning procedure.
As FQI iterations proceed, new estimation errors are gradually introduced, resulting in overfitting the model (with the \textit{target loss} minimized on the training dataset), and consequently in degradation of out-of-sample performances over time. This is due to the error propagation w.r.t. the optimal $Q-$value function in the whole state-action space (and task space, in our case), as in \cite{farahmand2010error}. As a consequence, the model iterations are evaluated in a validation process, as in the standard model selection procedure, on a set of out-of-sample tasks and policies. From this set, the model obtaining the best mean return, said $N$ is selected.
The results of the selected models are shown in Figure \ref{fig:learning}, along with NGA performed with fixed step size, tested on the same 20 trials (i.e., on the same random test tasks and initial policies), and performed with the same batch size for each trial. Our code is based upon OpenAI Gym \cite{gym} and Baselines \cite{baselines} toolkits.

\vspace{0.15cm}
\textit{Navigation2d:} For our first evaluation of the approach, we consider one of the environments presented in \cite{finn2017modelagnostic}, called Navigation2D. This environment consists of a unit square space in which an agent aims to reach a random goal in the plane.
The distribution of the tasks implemented is such that, at each episode, a different goal point is uniformly selected in the unit square.
As we can note in the left plots of Figure \ref{fig:learning}, the algorithm can select large step sizes with a good starting return gain without suffering from any drop. The algorithm can calibrate its action, starting with larger improvements and slowing down once the policy gets good results. In addition, all trajectories reach convergence in fewer steps than any other method.

\vspace{0.15cm}
\textit{Minigolf:}
In our second experiment, inspired by \cite{penner2002physics,tirinzoni19a}, we consider the scenario of a flat minigolf green, in which the agent has to hit the ball with a putter and place the ball inside the hole in the minimum number of strokes. The CMDP is built by varying the putter length and the friction coefficient. The environment is Lipschitz w.r.t. the context, but it is the only framework where the reward is non-Lipschitz, since for each step it can be either 0 if the shot is a success, -1 if the ball does not reach the goal (and the episode continues) or -100 for overshooting.
The central plot in Figure \ref{fig:learning} illustrates the performance of our approach in the same set of random test tasks. 
We can see that the algorithm can consistently reach the optimal values by choosing an adaptive step size. In addition, the convergence to the global optimum is achieved in around 10 meta steps of training, a substantial improvement w.r.t. the choice of a fixed learning rate, which leads (when it converges) to a local minimum, meaning constantly undershooting until the end of the episode. 

\vspace{0.15cm}
\textit{CartPole:}
For our third experiment, we examine the CartPole balancing task \cite{barto83cartpole}, which consists of a pole attached to a cart, where the agent has to move to balance the pole as long as possible.
The CMDP is induced by varying the pole mass and length.
To be more focused on the very first steps, and to better generalize on the overall policy and task space, the training dataset was built considering trajectories with only 15 total updates.
To have a fair comparison, the right plots of Figure \ref{fig:learning} illustrate an evaluation of the approach in the selected environment, where we have tested the resulting  FQI model (and NGA with fixed step sizes) performing the same number of total updates as the training trajectories.\footnote{Being this environment an alteration of the classic Cartpole, standard results cannot be compared.} In the supplementary materials, we provide further results, where the models are tested for a longer horizon $T=60$ and show the results closer to convergence.
Differently from before, it is possible to see that the best model (blue solid line) is choosing to update the policy with small learning rates: this leads to a lower immediate return gain (high rates have a better learning curve in the first steps) but allows to improve the overall meta return. This is because the model is planning with a horizon of $N=5$ policy updates. Indeed, we included also the results of the first FQI iteration, which tries to optimize the immediate gain. As expected, the agent selects high step sizes for the first iterations, obtaining high immediate rewards only in the first learning steps.


\vspace{0.15cm}
\textit{Half-cheetah with goal velocity}: 
As a last environment, we considered the half-cheetah locomotion problem introduced in \cite{finn2017modelagnostic} with MuJoCo simulator \cite{todorov2012mujoco}, where a planar cheetah has to learn to run with a specific goal velocity. This is the most complex environment among the ones presented as the policy, albeit linear, is composed of 108 parameters.
From the rightmost plot of Figure \ref{fig:learning} we can see the performance gain $j(\vtheta_t)-j(\vtheta_0)$.\footnote{The expected return changes deeply w.r.t. the task $\vomega$, hence the learning curves as in the other plots in Figure \ref{fig:learning} show very high variance, independently from the robustness of the models.} The FQI model, trained with NGA trajectories with $T=500$ total updates, is learning faster than benchmarks. The interesting fact is that the meta actions chosen by the model are within the range [0.2, 0.4], while the curves obtained with a fixed learning rate within those values are not able to obtain the same return gains. In the figure, we provide also the \textit{oracle} value, as provided in \cite{finn2017modelagnostic}. 

\begin{figure*}[t]
\centering
	\includegraphics[width=1\textwidth]{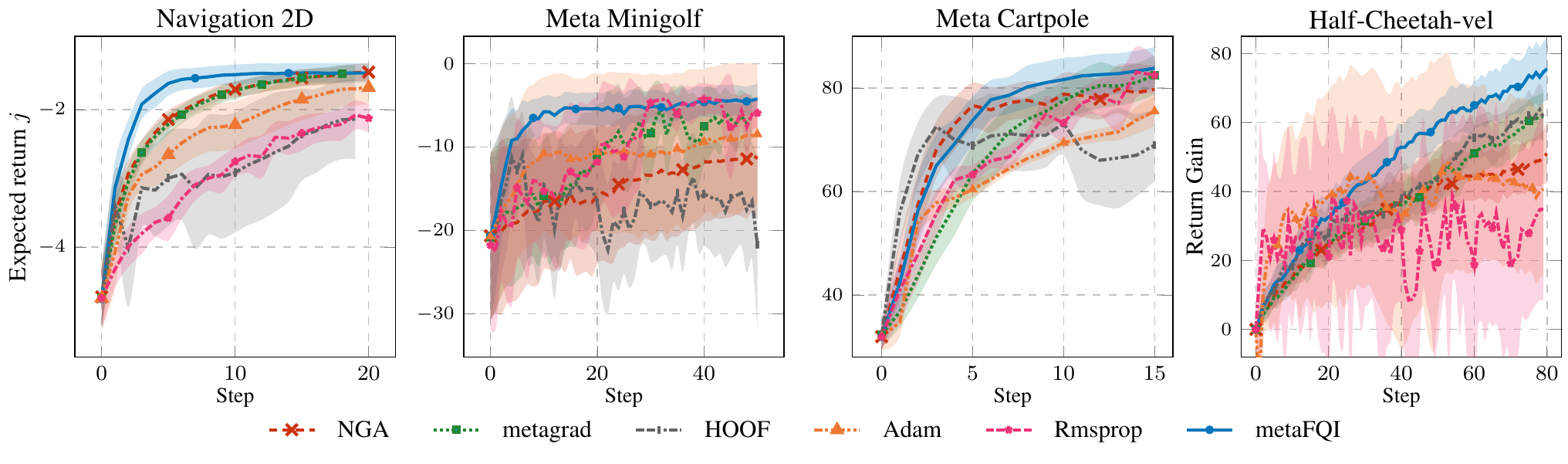}
	\caption{FQI performance comparison against benchmarks (20 runs, 95\% c.i.).}
	\label{fig:learning_dec}
\end{figure*}
\vspace{0.15cm}
\textit{Benchmark comparison. } In Figure \ref{fig:learning} we compared our approach with the choice of a fixed step size. There are, of course, many different schedules and optimization algorithms for the choice of the learning rate, and among the most widely adopted there are RMSprop and Adam \cite{kingma2014adam}. The former considers an adaptive learning rate by introducing a momentum term and normalizing the step direction through a moving average of the square gradient. Adam, instead, takes also advantage of the exponentially decaying average of the second moments of the gradients.
We compared our results (\textit{metaFQI}) against tuned implementations of the mentioned update rules, and against the best fixed stepsize (\textit{NGA}). Moreover, we include in the comparison also two other benchmarks for learning rate adaptation: \textit{HOOF, \cite{Supratik2019Fast}} and \textit{metagrad, \cite{xu2018meta}}, which have been implemented to optimize the stepsize for NGA (more details in the supplementary material).
the results are shown in Figure \ref{fig:learning_dec}, in the same settings as the ones provided in Section \ref{sec:experiments}. The only difference, for reasons of computational times, is the horizon of the Half-Cheetah environment, reduced to $T=80$. We see that our approach outperforms the previous methods, showing improved learning with, in general, lower variance in the returns obtained. Moreover, all the considered benchmarks heavily rely on the initial stepsize chosen and on the outer meta-hyperparameters, which deeply affect the learning capabilities.

\section{Discussion, Limitations and Future Work}\label{sec:conc}
In this paper, we considered the problem of hyperparameter tuning for policy gradient-based algorithms in Contextual Markov Decision Processes, where heterogeneous contexts may require different solutions.
In particular, we modeled the general problem through the meta-MDP definition, for which any policy-based update rule can be optimized using learning as reward. We analyzed the case of Lipschitz meta-MDPs, deriving some general guarantees that hold if the model is smooth with respect to the parameterization of the context and the transition processes.
Finally, we implemented the Fitted Q-Iteration algorithm on the meta-MDP where the update rule is the Natural Gradient Ascent, and we used it to choose an adaptive step size through the learning process.
The approach has been evaluated in different settings, where we observed good generalization capabilities of the model, which can reach fast convergence speed and robustness without the need for manual hyperparameter tuning.

Many challenges can be addressed in future work for this approach to be effective in real-life applications. First of all, more complex environments can be considered, and we can extend this method to different update rules and hyperparameters. One direct extension of our approach can be applied to the choice of the max Kullback-Leibler divergence constraints in Trust-Region-based approaches \cite{schulman2015trust,schulman2017proximal}: some results in this direction can already be observed in \cite{occorso22a}. Moreover, the main limitation of our current approach is the same as for many hyperparameter tuning approaches: the computational time required to build the training dataset. One possible way to improve the sample efficiency might consist in evaluating the meta-reward by means of importance sampling, as in \cite{Supratik2019Fast}. In realistic settings, where deep policies are required, the inclusion of all policy parameters in the meta-state might be inefficient; a solution might consist in compressing the representation of the policy through autoencoders, or through the choice of specific informative meta-features: in this way, our approach would be independent on the policy architecture and scalable for large domains.

\subsection*{Ethical Statement}
Hyperparameter selection for policy-based algorithms has a significant impact on the ability to learn a highly performing policy in Reinforcement Learning, especially with heterogeneous tasks, where different contexts may require different solutions. Our approach shows that it is possible to learn an automatic selection of the best configurations that can be identified after a manual fine-tuning of the parameters. Consequently, our work can be seen as a further step in the AutoML direction, in which a practitioner could run the algorithm and, with some guidance, obtain optimal performance in just a few steps without the need for manual fine-tuning. Beyond this, we are not aware of any societal consequences of our work, such as welfare, fairness, or privacy.

\bibliographystyle{splncs04}
\bibliography{biblio}

\appendix
\onecolumn
\section{Proofs}\label{app:proofs}
In this part of the appendix, we provide the proofs of the results shown in the main paper.
\subsection{Lipschitz continuity of the action-value function}
Before describing the proof for Theorem \ref{thm:lip_q_omega}, we need to recall the Bellman Operator $T^\pi$ for the Action Value Function $Q_{\vomega}^\pi$:
\begin{align*}
       T^\pi Q_{\vomega}^{\pi}(s,a)&=R_{\vomega}(s,a) + \gamma \int_\Sspace P_{\vomega}(s'|s,a)\int_\Aspace Q_{\vomega}^{\pi}(s',a')\pi(a'|s')dads'\\
       &=R_{\vomega}(s,a) + \gamma \int_\Sspace P_{\vomega}(s'|s,a)V_{\vomega}^{\pi}(s')ds'
\end{align*}
where $Q_{\vomega}^\pi$ is the fixed point. 

\noindent
Moreover, let's consider a preliminary result on the LC-continuity of the value functions presented in \cite{pirotta2015policy} :
\begin{restatable}[Lipschitz value functions]{lemma}{}\label{lem:Q_lip}
Given an $(L_P, L_R)$-LC MDP and a $L_\pi$-LC stationary policy $\pi$, if $\gamma L_P(1+L_\pi)<1$, then the Q-function $Q^\pi$ is $L_{Q^\pi}$-LC and the V function is $L_{V^\pi}$-LC \wrt the joint state-action space;
\begin{equation*}
    L_{Q^\pi} = \frac{L_R}{1-\gamma L_P(1+L_\pi)}; \qquad L_{V^\pi} = L_{Q^\pi}(1+L_\pi)
\end{equation*}
\end{restatable}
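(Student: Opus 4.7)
The plan is to establish the claim by a fixed-point argument on the Bellman operator $T^\pi$ acting on the Banach space of bounded measurable functions on $\Sspace\times\Aspace$. The key observation is that $T^\pi$ is a $\gamma$-contraction in sup-norm, so $Q^\pi$ is its unique bounded fixed point; if I can show that the subset of $L$-Lipschitz functions is invariant under $T^\pi$ for any $L$ satisfying a suitable inequality, then (being closed under uniform convergence) it contains the fixed point. I would pick the smallest such $L$, which yields the stated formula.

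The computational core is to bound the Lipschitz constant of $T^\pi Q$ in terms of that of $Q$. Writing
\begin{align*}
T^\pi Q(s,a) - T^\pi Q(\bar s,\bar a) = \bigl[R(s,a)-R(\bar s,\bar a)\bigr] + \gamma\!\int_\Sspace V(s')\bigl[P(\mathrm{d}s'|s,a)-P(\mathrm{d}s'|\bar s,\bar a)\bigr],
\end{align*}
where $V(s')\coloneqq\int_\Aspace Q(s',a')\pi(\mathrm{d}a'|s')$, I would bound the first bracket by $L_R\, d_{\Sspace\times\Aspace}$ using Assumption \ref{ass:LipMDP}, and the integral term by Kantorovich duality: $\bigl|\int V\,\mathrm{d}(P-P')\bigr|\le \|V\|_L\,\mathcal{K}(P,P')\le \|V\|_L\,L_P\,d_{\Sspace\times\Aspace}$. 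So the problem reduces to bounding $\|V\|_L$ in terms of $\|Q\|_L$. For that I would split
\begin{align*}
V(s)-V(\bar s)=\int Q(s,a)\bigl[\pi(\mathrm{d}a|s)-\pi(\mathrm{d}a|\bar s)\bigr]+\int\bigl[Q(s,a)-Q(\bar s,a)\bigr]\pi(\mathrm{d}a|\bar s),
\end{align*}
using Kantorovich duality again on the first piece (with $\|Q(s,\cdot)\|_L\le\|Q\|_L$, since Lipschitzness in the product metric implies Lipschitzness in each coordinate with the same constant, once $d_{\Sspace\times\Aspace}$ is taken to be additive or Euclidean in the natural way) and Lipschitzness in $s$ on the second piece, giving $\|V\|_L\le \|Q\|_L(1+L_\pi)$.

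Combining the two estimates yields $\|T^\pi Q\|_L\le L_R+\gamma L_P(1+L_\pi)\|Q\|_L$. The smallest $L$ for which this is $\le L$ is $L^\star=L_R/\bigl(1-\gamma L_P(1+L_\pi)\bigr)$, which is finite and positive exactly under the hypothesis $\gamma L_P(1+L_\pi)<1$. Hence the set of $L^\star$-Lipschitz bounded functions is $T^\pi$-invariant, closed in sup-norm, and contains at least the zero function to start the iteration; the Banach fixed-point theorem then places $Q^\pi$ in this set, giving $L_{Q^\pi}\le L^\star$. Feeding this back into the $V$-$Q$ inequality above yields $L_{V^\pi}\le L_{Q^\pi}(1+L_\pi)$, as claimed.

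The step I expect to require the most care is the $V$-to-$Q$ bound, because it is the only place where one invokes Lipschitzness of $Q$ in the action coordinate alone and uses the policy's Kantorovich-Lipschitz constant. I would check that the chosen product metric $d_{\Sspace\times\Aspace}$ is compatible, i.e., that $d_\Sspace(s,\bar s)\le d_{\Sspace\times\Aspace}((s,a),(\bar s,a))$ and $d_\Aspace(a,\bar a)\le d_{\Sspace\times\Aspace}((s,a),(s,\bar a))$ (true for sum or Euclidean product metrics), and that $R$ bounded implies $Q^\pi$ is bounded so that the fixed-point iteration is well-posed in the Banach space of bounded functions. Everything else is routine contraction/closedness bookkeeping.
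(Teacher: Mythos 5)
Your proof is correct and is essentially the argument the paper relies on: the lemma itself is stated without proof (cited from Pirotta et al., 2015), but the paper's own appendix proof of the analogous context-Lipschitz bound (Theorem~\ref{thm:lip_q_omega}) uses exactly the same mechanism --- apply the Bellman operator, decompose into the reward term, a Kantorovich-duality term on the transition kernel, and a $V$-to-$Q$ step costing a factor $(1+L_\pi)$, obtaining that an $L$-Lipschitz input yields an $\bigl(L_R+\gamma L_P(1+L_\pi)L\bigr)$-Lipschitz output, and then passing to the fixed point $L^\star=L_R/\bigl(1-\gamma L_P(1+L_\pi)\bigr)$. Your invariant-set/closedness packaging is a slightly cleaner way of stating the paper's ``the recurrence on Lipschitz constants converges to its fixed point since $\gamma<1$'' step, but it is the same proof.
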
 

\lipschitzQ*
\begin{proof}
We follow the same ideas as in \cite{oatao17977}: first of all, given an $L_{\omega_{Q}}$-LC continuous Q function $Q^{\pi}$ w.r.t. the task space $\Omega$, the related value function $V_{\vomega}^{\pi}$ is $L_{\omega_{Q}}$-LC. Indeed,
\begin{equation*}
    \begin{aligned}
    \biggl|V_{\vomega}^{\pi}(s)-V_{\widehat{\vomega}}^{\pi}(s)\biggr| &=\left|\int_{\mathcal{A}} \pi(a \mid s)\left(Q_{\vomega}^{\pi}(s, a)-Q_{\widehat{\vomega}}^{\pi}(s, a)\right) d a\right| \\
    & \leq \int_{\mathcal{A}} \pi(a \mid s)\biggl|Q_{\vomega}^{\pi}(s, a)-Q_{\widehat{\vomega}}^{\pi}(s, a)\biggr| d a \\
    & \leq \max _{a}\biggl|Q_{\vomega}^{\pi}(s, a)-Q_{\widehat{\vomega}}^{\pi}(s, a)\biggr| \leq L_{\omega_{Q}} d_{\Omega}(\vomega, \widehat{\vomega}).
    \end{aligned}
\end{equation*}
Now, we consider the iterative application of Bellman Operators, in such a way that $Q_{\vomega}^{\pi,n+1}=T^\pi Q_{\vomega}^{\pi,n}$, and we prove that $Q_{\vomega}^{\pi,n}$ is $L_{\omega_{Q}}^{n}$-LC continuous, and that satisfies the recurrence relation:
\begin{equation}\label{eq:rec_L}
    L_{\omega_{Q}}^{n+1}=L_{\omega_{r}}+\gamma L_{\pi} L_{V}(\vomega)+\gamma L_{\omega_{Q}}^{n}.
\end{equation}
Indeed, for $n=1$ the property holds immediately, since:
\begin{equation*}
    \left|Q_{\vomega}^{\pi, 1}(s, a)-Q_{\widehat{\vomega}}^{\pi, 1}(s, a)\right|=\Bigl|R_{\vomega}(s, a)-R_{\widehat{\vomega}}(s, a)\Bigr| \leq L_{\omega_{r}} d_{\Omega}(\vomega, \widehat{\vomega}).
\end{equation*}
Now, let us suppose the property holds for $n$. Then:
\begin{align*}
    \biggl|Q_{\vomega}^{\pi, n+1}(s, a)-Q_{\widehat{\vomega}}^{\pi, n+1}(s, a)\biggr| =\nonumber\\ 
    \biggl\lvert R_{\vomega}(s, a)-R_{\widehat{\vomega}}(s, a) 
  +\gamma \int_{\mathcal{S}} P_{\vomega}\left(s^{\prime} \mid s, a\right) V_{\vomega}^{\pi, n}\left(s^{\prime}\right) d s^{\prime} 
  -\gamma \left.\int_{\mathcal{S}} P_{\widehat{\vomega}}\left(s^{\prime} \mid s, a\right) V_{\widehat{\vomega}}^{\pi}\left(s^{\prime}\right) d s^{\prime} \right\rvert \nonumber\\
 \leq L_{\omega_{r}} d_{\Omega}(\vomega, \widehat{\vomega}) 
 +\gamma\left|\int_{\mathcal{S}}\left(P_{\vomega}\left(s^{\prime} \mid s, a\right)-P_{\widehat{\vomega}}\left(s^{\prime} \mid s, a\right)\right) V_{\vomega}^{\pi, n}\left(s^{\prime}\right) d s^{\prime}\right|\nonumber\\
  +\gamma\left|\int_{\mathcal{S}} P_{\widehat{\vomega}}\left(s^{\prime} \mid s, a\right)\left(V_{\vomega}^{\pi, n}\left(s^{\prime}\right)-V_{\widehat{\vomega}}^{\pi, n}\left(s^{\prime}\right)\right) d s^{\prime}\right| \nonumber\\
 \leq L_{\omega_{r}} d_{\Omega}(\vomega, \widehat{\vomega})+\gamma L_{V}(\vomega) \sup _{\|f\|_{L} \leq 1}\left\{\left|\int_{\mathcal{S}}\left(P_{\vomega}\left(s^{\prime} \mid s, a\right)\right.\right.\right. 
  - \left. P_{\widehat{\vomega}}\left(s^{\prime} \mid s, a\right)\right) f\left(s^{\prime}\right) d s^{\prime}\biggr|\biggr\} \nonumber\\
  +\gamma \max _{s^{\prime}}\biggl|V_{\vomega}^{\pi, n}\left(s^{\prime}\right)-V_{\widehat{\vomega}}^{\pi, n}\left(s^{\prime}\right)\biggr| \nonumber\\
 \leq \left(L_{\omega_{r}}+\gamma L_{\omega_{P}} L_{V}(\vomega)+\gamma L_{\omega_{Q}}^{n}\right) d_{\Omega}(\vomega, \widehat{\vomega}).\nonumber
\end{align*}
Consequently, Inequality \ref{eq:rec_L} holds. Now, if the sequence $L_{\omega_{Q}}^{n}$ is convergent, it converges to the fixed point of the recurrence equation:
\begin{equation*}
    L_{\omega_{Q}}=L_{\omega_{r}}+\gamma L_{\omega_{P}} L_{V}(\vomega)+\gamma L_{\omega_{Q}}.
\end{equation*}
Hence the limit point is the one expressed in Equation \ref{eq:L_Qw}, and the sequence can be proven to be convergent since $\gamma<1$.
\end{proof}
As a consequence, the Proof that $j_{\vomega}(\pi)$ is CLC under $\vomega$ is immediate:
\begin{equation*}
    \begin{aligned}
    \biggl|j_{\vomega}(\pi)-j_{\widehat{\vomega}}(\pi)\biggr| &=\left|\int_{\mathcal{S}} \mu\left(s_{0}\right)\left[V_{\vomega}^{\pi}\left(s_{0}\right)-V_{\widehat{\vomega}}^{\pi}\left(s_{0}\right)\right] ds_{0}\right| \\
    & \leq \int_{\mathcal{S} \times \mathcal{A}} \mu\left(s_{0}\right) \pi\left(a \mid s_{0}\right)\Bigl|Q_{\vomega}^{\pi}\left(s_{0}, a\right)-Q_{\widehat{\vomega}}^{\pi}\left(s_{0}, a\right)\Bigr| da ds_{0} \\
    & \leq L_{\omega_{Q}}(\pi) d_{\Omega}(\vomega, \widehat{\vomega}).
    \end{aligned}
\end{equation*}

\subsection{Lipschitz Continuity of the gradient}
In order to consider the Lipschitz continuity of the gradient of the return $\nabla j_{\vomega}(\vtheta)$, we first need to introduce two more assumptions:
\begin{restatable}[]{ass}{Lipschitz Parametric Policy}\label{as:theta_LC}
Let $\pi_{\boldsymbol{\theta}} \in \Pi$ be a policy parametrized in the parameters space $\boldsymbol{\theta}$. A LC-policy $\pi$ satisfies the following conditions:
\begin{align}
    \forall \boldsymbol{\theta} \in \boldsymbol{\Theta}, \forall s, s^{\prime} \in \mathcal{S} &\quad \mathcal{K}\left(\pi_{\boldsymbol{\theta}}(\cdot \mid s), \pi_{\boldsymbol{\theta}}\left(\cdot \mid s^{\prime}\right)\right) \leq L_{\pi_{\boldsymbol{\theta}}} d_{\mathcal{S}}\left(s, s^{\prime}\right) \label{eq:s-piLC}\\
    \forall s \in \mathcal{S}, \forall \boldsymbol{\theta}, \boldsymbol{\theta}^{\prime} \in \boldsymbol{\Theta} &\quad \mathcal{K}\left(\pi_{\boldsymbol{\theta}}(\cdot \mid s), \pi_{\boldsymbol{\theta}^{\prime}}(\cdot \mid s)\right) \leq L_{\pi}(\boldsymbol{\theta}) d_{\boldsymbol{\Theta}}\left(\boldsymbol{\theta}, \boldsymbol{\theta}^{\prime}\right).\label{eq:theta-piLC}
\end{align}
\end{restatable}

\begin{restatable}[]{ass}{Lipschitz Gradient of Policy Logarithm}\label{as:grad_LC}
The gradient of the policy logarithm must satisfy the conditions of:
\begin{enumerate}
    \item Uniformly bounded gradient: $\forall(s, a) \in \mathcal{S} \times \mathcal{A}, \forall \boldsymbol{\theta} \in \boldsymbol{\Theta}, \forall i=1, \ldots, d$
    \begin{equation*}
        \left|\nabla_{\boldsymbol{\theta}_{i}} \log \pi_{\boldsymbol{\theta}}(a \mid s)\right| \leq M_{\boldsymbol{\theta}}^{i};
    \end{equation*}
    \item  State-action LC: $\forall\left(s, s^{\prime}, a, a^{\prime}\right) \in \mathcal{S}^{2} \times \mathcal{A}^{2}, \forall \boldsymbol{\theta} \in \boldsymbol{\Theta}, \forall i=1 \ldots, d$
    \begin{equation*}
        \left.\left|\nabla_{\boldsymbol{\theta}_{i}} \log \pi_{\boldsymbol{\theta}}(a \mid s)-\nabla_{\boldsymbol{\theta}_{i}} \log \pi_{\boldsymbol{\theta}}\left(a^{\prime} \mid s^{\prime}\right)\right| \leq L^i_{\nabla \log \pi} d_{\mathcal{S} \times \mathcal{A}}\left((s, a),\left(s^{\prime}, a^{\prime}\right)\right)\right);
    \end{equation*}
    \item Parametric LC: $\forall\left(\boldsymbol{\theta}, \boldsymbol{\theta}^{\prime}\right) \in \boldsymbol{\Theta}, \forall(s, a) \in \mathcal{S} \times \mathcal{A}, \forall i=1, \ldots, d$
    \begin{equation*}
        \left|\nabla_{\boldsymbol{\theta}_{i}} \log \pi_{\boldsymbol{\theta}}(a \mid s)-\nabla_{\boldsymbol{\theta}_{i}} \log \pi_{\boldsymbol{\theta}^{\prime}}(a \mid s)\right| \leq L_{\nabla \log \pi}^{i}(\boldsymbol{\theta}) d_{\boldsymbol{\Theta}}\left(\boldsymbol{\theta}, \boldsymbol{\theta}^{\prime}\right).
    \end{equation*}
\end{enumerate}
\end{restatable}

\begin{lemma} [Lemma 3 from \cite{pirotta2015policy}]
Given Assumptions \ref{ass:LipMDP}, \ref{as:theta_LC}, if $\gamma L_{P}(1+$ $\left.L_{\pi_{\boldsymbol{\theta}}}\right)<1$, the Kantorovich distance between a pair of $\gamma$-discounted feature state distributions is Parametric-LC (PLC) w.r.t. parameters $\boldsymbol{\theta}: \forall(\boldsymbol{\theta}, \widehat{\boldsymbol{\theta}}) \in \boldsymbol{\Theta}^{2}$:
\begin{equation}
    \mathcal{K}\left(\delta_{\mu}^{\boldsymbol{\theta}}, \delta_{\mu}^{\widehat{\boldsymbol{\theta}}}\right) \leq L_{\delta}(\boldsymbol{\theta}) d_{\boldsymbol{\Theta}}(\boldsymbol{\theta}, \widehat{\boldsymbol{\theta}});
\end{equation}
where $ L_{\delta}(\boldsymbol{\theta})=\frac{\gamma L_{P} L_{\pi}(\boldsymbol{\theta})}{1-\gamma L_{P}\left(1+L_{\pi_{\boldsymbol{\theta}}}\right)}$.
\end{lemma}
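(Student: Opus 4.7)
The plan is to exploit the fixed-point characterization of the discounted state occupancy. Viewing the policy-induced pushforward operator
\[
(T_{\boldsymbol{\theta}}^* d)(s') \coloneqq \int_\Sspace \int_\Aspace P(s'\mid s,a)\, \pi_{\boldsymbol{\theta}}(a\mid s)\, d(s)\, da\, ds,
\]
one checks directly from the definition that $\delta_\mu^{\boldsymbol{\theta}} = (1-\gamma)\mu + \gamma\, T_{\boldsymbol{\theta}}^* \delta_\mu^{\boldsymbol{\theta}}$, and analogously for $\widehat{\boldsymbol{\theta}}$. Subtracting these two identities and taking the supremum over $1$-Lipschitz test functions gives
\[
\mathcal{K}\!\left(\delta_\mu^{\boldsymbol{\theta}},\delta_\mu^{\widehat{\boldsymbol{\theta}}}\right) = \gamma\, \mathcal{K}\!\left(T_{\boldsymbol{\theta}}^* \delta_\mu^{\boldsymbol{\theta}},\, T_{\widehat{\boldsymbol{\theta}}}^* \delta_\mu^{\widehat{\boldsymbol{\theta}}}\right),
\]
and a triangle inequality splits the right side into a ``same policy, different measure'' term and a ``same measure, different policy'' term by inserting $T_{\boldsymbol{\theta}}^* \delta_\mu^{\widehat{\boldsymbol{\theta}}}$. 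The whole proof then reduces to establishing two Lipschitz/contraction bounds on these two terms and solving the resulting self-referential inequality for $\mathcal{K}(\delta_\mu^{\boldsymbol{\theta}},\delta_\mu^{\widehat{\boldsymbol{\theta}}})$.

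For the first term I would prove the contraction estimate $\mathcal{K}(T_{\boldsymbol{\theta}}^* p, T_{\boldsymbol{\theta}}^* q) \leq L_P(1+L_{\pi_{\boldsymbol{\theta}}})\, \mathcal{K}(p,q)$. Fix a $1$-Lipschitz $f$ and define $F(s,a) \coloneqq \int P(s'\mid s,a) f(s')\, ds'$; by Assumption~\ref{ass:LipMDP} the map $F$ is $L_P$-Lipschitz on $\Ss\times\As$. Setting $\psi(s) \coloneqq \int F(s,a)\, \pi_{\boldsymbol{\theta}}(a\mid s)\, da$, an add-and-subtract argument together with the state-Lipschitzness of $\pi_{\boldsymbol{\theta}}$ (Assumption~\ref{as:theta_LC}, eq.~\eqref{eq:s-piLC}) gives $\|\psi\|_L \leq L_P(1+L_{\pi_{\boldsymbol{\theta}}})$, and the desired bound follows from the Kantorovich duality applied to $p-q$.

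For the second term I would prove $\mathcal{K}(T_{\boldsymbol{\theta}}^* d,\, T_{\widehat{\boldsymbol{\theta}}}^* d) \leq L_P L_\pi(\boldsymbol{\theta})\, d_{\boldsymbol{\Theta}}(\boldsymbol{\theta},\widehat{\boldsymbol{\theta}})$, uniformly in the probability measure $d$. Again using $F(s,a)$ defined above, for each fixed $s$ the map $a \mapsto F(s,a)$ is $L_P$-Lipschitz, so by the parametric Kantorovich bound on the policy (Assumption~\ref{as:theta_LC}, eq.~\eqref{eq:theta-piLC})
\[
\left|\int F(s,a)\bigl(\pi_{\boldsymbol{\theta}}(a\mid s) - \pi_{\widehat{\boldsymbol{\theta}}}(a\mid s)\bigr)\, da\right| \leq L_P\, L_\pi(\boldsymbol{\theta})\, d_{\boldsymbol{\Theta}}(\boldsymbol{\theta},\widehat{\boldsymbol{\theta}}),
\]
and integrating against $d$ and taking a sup over $f$ yields the claim.

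Combining these two bounds in the triangle inequality gives
\[
\mathcal{K}\!\left(\delta_\mu^{\boldsymbol{\theta}},\delta_\mu^{\widehat{\boldsymbol{\theta}}}\right) \leq \gamma L_P(1+L_{\pi_{\boldsymbol{\theta}}})\, \mathcal{K}\!\left(\delta_\mu^{\boldsymbol{\theta}},\delta_\mu^{\widehat{\boldsymbol{\theta}}}\right) + \gamma L_P\, L_\pi(\boldsymbol{\theta})\, d_{\boldsymbol{\Theta}}(\boldsymbol{\theta},\widehat{\boldsymbol{\theta}}),
\]
and the contraction hypothesis $\gamma L_P(1+L_{\pi_{\boldsymbol{\theta}}}) < 1$ lets us solve for $\mathcal{K}(\delta_\mu^{\boldsymbol{\theta}},\delta_\mu^{\widehat{\boldsymbol{\theta}}})$ and obtain exactly $L_\delta(\boldsymbol{\theta}) d_{\boldsymbol{\Theta}}(\boldsymbol{\theta},\widehat{\boldsymbol{\theta}})$. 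The main obstacle, as is typical for these Wasserstein/Kantorovich arguments, is the careful bookkeeping of Lipschitz constants when composing the transition kernel with the two different policies: one must track separately the state-Lipschitz constant $L_{\pi_{\boldsymbol{\theta}}}$ (which controls the contraction factor) and the parametric constant $L_\pi(\boldsymbol{\theta})$ (which drives the source term), and verify that the self-referential inequality can be closed thanks to the assumed strict contraction.
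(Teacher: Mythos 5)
Your proof is correct, and it follows essentially the same strategy the paper itself uses: the paper cites this lemma from prior work without reproving it, but its proof of the companion result (the context-Lipschitz continuity of $\delta_{\mu,\vomega}^{\boldsymbol{\theta}}$) uses exactly your decomposition — unroll the fixed-point equation of the occupancy measure, add and subtract to split into a contraction term with factor $\gamma L_P(1+L_{\pi_{\boldsymbol{\theta}}})$ and a source term driven by the perturbed component, then close the self-referential inequality. Your bookkeeping of the two Lipschitz constants and the resulting constant $L_\delta(\boldsymbol{\theta})$ are all consistent with the stated result.
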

\noindent In the same fashion, we can now define the state occupancy measure in the task $\mathcal{M}_{\vomega}$ as $\delta_{\mu,\vomega}^{\vtheta}$ and
we can prove the following lemma:\footnote{Assumption \ref{as:theta_LC} is not entirely required, but only Inequality \ref{eq:s-piLC} is required to hold.}
\begin{lemma}[L-continuity of meta state occupancy measures] Given Assumptions \ref{ass:LipMDP}, \ref{as:theta_LC} and \ref{as:lip_metamdp}, if $\gamma L_{P}(\vomega)\left(1+L_{\pi_{\boldsymbol{\theta}}}\right)<1$, then the Kantorovich distance between a pair of $\gamma$-discounted feature-state distributions is CLC w.r.t. context $\vomega$:
\begin{equation}
    \mathcal{K}\left(\delta_{\mu, \vomega}^{\boldsymbol{\theta}}, \delta_{\mu, \widehat{\vomega}}^{\boldsymbol{\theta}}\right) \leq L_{\delta}(\vomega) d_{\Omega}(\vomega, \widehat{\vomega}), \quad \forall(\vomega, \widehat{\vomega}) \in \Omega^{2};
\end{equation}
where $L_{\delta}(\vomega)=\frac{\gamma L_{\omega_{P}}}{1-\gamma L_{P}(\vomega)\left(1+L_{\pi_{\boldsymbol{\theta}}}\right)}$.
\end{lemma}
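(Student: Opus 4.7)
The plan is to mirror the structure of Lemma 3 of \cite{pirotta2015policy} (the parametric version) but exchange $\vtheta$ for $\vomega$ as the varying quantity. The backbone is the Bellman-like fixed-point identity that follows directly from the definition of $\delta_{\mu,\vomega}^{\vtheta}$:
$$\delta_{\mu,\vomega}^{\vtheta}(s') \;=\; (1-\gamma)\,\mu(s') \;+\; \gamma \int_{\Sspace} P_{\vomega}^{\pi_{\vtheta}}(s' \mid s)\,\delta_{\mu,\vomega}^{\vtheta}(s)\,ds,$$
where $P_{\vomega}^{\pi_{\vtheta}}(s'\mid s) := \int_{\Aspace} \pi_{\vtheta}(a\mid s)\,P_{\vomega}(s'\mid s,a)\,da$ is the policy-induced transition kernel. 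Writing this identity for both $\vomega$ and $\widehat{\vomega}$ and subtracting, the $(1-\gamma)\mu$ contributions cancel and I am left with a recursion on $\delta_{\mu,\vomega}^{\vtheta}-\delta_{\mu,\widehat{\vomega}}^{\vtheta}$.

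Next I would invoke the dual Kantorovich formulation: fix an arbitrary $1$-LC test function $f:\Sspace\to\mathbb{R}$ and bound $\int f\,d(\delta_{\mu,\vomega}^{\vtheta}-\delta_{\mu,\widehat{\vomega}}^{\vtheta})$. Adding and subtracting $P_{\vomega}^{\pi_\vtheta}(s'|s)\,\delta_{\mu,\widehat{\vomega}}^{\vtheta}(s)$ inside the integrand splits the expression cleanly into an \emph{environment-gap term} involving $(P_{\vomega}^{\pi_\vtheta}-P_{\widehat{\vomega}}^{\pi_\vtheta})$ averaged against $\delta_{\mu,\widehat{\vomega}}^{\vtheta}$, and an \emph{occupancy-gap term} involving $(\delta_{\mu,\vomega}^{\vtheta}-\delta_{\mu,\widehat{\vomega}}^{\vtheta})$ pushed forward through $P_{\vomega}^{\pi_\vtheta}$. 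For the first term, $P_\vomega^{\pi_\vtheta}(\cdot|s)$ is the $\pi_\vtheta(\cdot|s)$-mixture of the kernels $P_\vomega(\cdot|s,a)$, so convexity of the Kantorovich distance combined with Assumption~\ref{as:lip_metamdp} applied pointwise in $a$ yields $\mathcal{K}(P_\vomega^{\pi_\vtheta}(\cdot|s),P_{\widehat{\vomega}}^{\pi_\vtheta}(\cdot|s)) \leq L_{\omega_P} d_\Omega(\vomega,\widehat{\vomega})$; averaging against the probability measure $\delta_{\mu,\widehat{\vomega}}^{\vtheta}$ preserves this bound.

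For the occupancy-gap term, I would define $\tilde f(s) := \int_\Sspace f(s')\,P_\vomega^{\pi_\vtheta}(s'|s)\,ds'$ and show that $\tilde f$ is $L_P(\vomega)(1+L_{\pi_\vtheta})$-LC in $s$. This is the heart of the argument: writing $\tilde f(s_1)-\tilde f(s_2)$ by adding and subtracting $\int \pi_\vtheta(a|s_1)\,F(s_2,a)\,da$, with $F(s,a):=\int f(s')\,P_\vomega(s'|s,a)\,ds'$, one piece is controlled by the state-LC of $P_\vomega$ (contributing $L_P(\vomega)$), while the other is controlled by the state-LC of the policy from Equation~\ref{eq:s-piLC} together with the fact that $F(s_2,\cdot)$ is $L_P(\vomega)$-LC in the action argument (contributing $L_P(\vomega)L_{\pi_\vtheta}$). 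Once $\tilde f$'s Lipschitz constant is in hand, the occupancy-gap term is bounded by $L_P(\vomega)(1+L_{\pi_\vtheta})\,\mathcal{K}(\delta_{\mu,\vomega}^{\vtheta},\delta_{\mu,\widehat{\vomega}}^{\vtheta})$ through the dual characterisation applied to the normalised version of $\tilde f$.

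Taking the supremum over all $1$-LC $f$ yields the self-referential inequality
$$\mathcal{K}\!\left(\delta_{\mu,\vomega}^{\vtheta},\delta_{\mu,\widehat{\vomega}}^{\vtheta}\right) \;\leq\; \gamma L_{\omega_P}\, d_\Omega(\vomega,\widehat{\vomega}) \;+\; \gamma L_P(\vomega)(1+L_{\pi_\vtheta})\,\mathcal{K}\!\left(\delta_{\mu,\vomega}^{\vtheta},\delta_{\mu,\widehat{\vomega}}^{\vtheta}\right),$$
which I would solve for $\mathcal{K}$ using the contraction hypothesis $\gamma L_P(\vomega)(1+L_{\pi_\vtheta})<1$, recovering exactly the claimed $L_\delta(\vomega)$. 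The main obstacle is the Lipschitz bound on $\tilde f$: one must carefully disentangle the two distinct ways in which the policy and the dynamics contribute to the joint state-action metric, and this is the only point where the factor $(1+L_{\pi_\vtheta})$ enters. Everything else is Fubini, the triangle inequality, and convexity of the Kantorovich distance for mixtures.
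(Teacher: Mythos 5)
Your proposal is correct and follows essentially the same route as the paper's proof: the identical add-and-subtract decomposition into an environment-gap term (bounded via Assumption~\ref{as:lip_metamdp} and convexity of the Kantorovich distance under policy mixtures) and an occupancy-gap term controlled through the auxiliary function $\tilde f$, which is exactly the paper's $h_{f,\vomega}^{\boldsymbol{\theta}}$ with the same $L_P(\vomega)(1+L_{\pi_{\boldsymbol{\theta}}})$ Lipschitz constant, followed by solving the resulting self-referential inequality. Your explicit two-step argument for the Lipschitz constant of $\tilde f$ is a detail the paper only asserts, but it does not change the structure of the proof.
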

\begin{proof}
\begin{align}
    &\mathcal{K}\left(\delta_{\mu, \vomega}^{\boldsymbol{\theta}}, \delta_{\mu, \widehat{\vomega}}^{\boldsymbol{\theta}}\right)=\sup _{f}\left\{\left|\int_{\mathcal{S}}\left(\delta_{\mu, \vomega}^{\boldsymbol{\theta}}(s)-\delta_{\mu, \widehat{\vomega}}^{\boldsymbol{\theta}}(s)\right) f(s) d s\right|:\|f\|_{L} \leq 1\right\} \nonumber\\
    &=\sup _{f}\left\{\left|_{\mathcal{S}}\left(\mu(s)+\gamma \int_{\mathcal{S}} \int_{\mathcal{A}} \pi_{\boldsymbol{\theta}}\left(a \mid s^{\prime}\right) P_{\vomega}\left(s \mid s^{\prime}, a\right) \delta_{\mu, \vomega}\left(s^{\prime}\right) d a d s^{\prime}\right) f(s)-\right.\right.\nonumber\\
    & \quad \left.-\left(\mu(s)+\gamma \int_{\mathcal{A}} \int_{\mathcal{S}} \pi_{\boldsymbol{\theta}}\left(a \mid s^{\prime}\right) P_{\widehat{\vomega}}\left(s \mid s^{\prime}, a\right) \delta_{\mu, \widehat{\vomega}}\left(s^{\prime}\right) d a d s^{\prime}\right) f(s) d s \biggr|:\|f\|_{L} \leq 1\right\}\nonumber\\
    &= \gamma \sup _{f:\|f\|_{L} \leq 1}\left\{\left|\int_{\mathcal{S}} f(s) \int_{\mathcal{A}} \int_{\mathcal{S}}\left(P_{\vomega}\left(s \mid s^{\prime}, a\right) \pi_{\boldsymbol{\theta}}\left(a \mid s^{\prime}\right) \delta_{\mu, \vomega}^{\boldsymbol{\theta}}\left(s^{\prime}\right)\right.\right.\right.\nonumber\\
    & \quad .\left. -P_{\widehat{\vomega}}\left(s \mid s^{\prime}, a\right) \pi_{\boldsymbol{\theta}}\left(a \mid s^{\prime}\right) \delta_{\mu, \widehat{\vomega}}^{\boldsymbol{\theta}}\left(s^{\prime}\right)\right) ds^{\prime} da ds\biggr|\biggr\} \nonumber\\
    &= \gamma \sup _{f:\|f\|_{L} \leq 1}\left\{\biggl| \int_{\mathcal{S}} f(s) \int_{\mathcal{A}} \int_{\mathcal{S}} P_{\vomega}\left(s \mid s^{\prime}, a\right) \pi_{\boldsymbol{\theta}}\left(a \mid s^{\prime}\right)\left(\delta_{\mu, \vomega}^{\boldsymbol{\theta}}\left(s^{\prime}\right)-\delta_{\mu, \widehat{\vomega}}^{\boldsymbol{\theta}}\left(s^{\prime}\right)\right) d s^{\prime} d a d s\right.\nonumber\\
    & \quad \left.+\int_{\mathcal{S}} f(s) \int_{\mathcal{A}} \int_{\mathcal{S}}\left(P_{\vomega}\left(s \mid s^{\prime}, a\right)-P_{\widehat{\vomega}}\left(s \mid s^{\prime}, a\right)\right) \pi_{\boldsymbol{\theta}}\left(a \mid s^{\prime}\right) \delta_{\mu, \widehat{\vomega}}^{\boldsymbol{\theta}}\left(s^{\prime}\right) d s^{\prime} d a d s \biggr|\right\}\nonumber\\
    &\leq \underbrace{\gamma \sup _{f:\|f\|_{L} \leq 1}\left\{\left|\int_{\mathcal{S}}\left(\delta_{\mu, \vomega}^{\boldsymbol{\theta}}\left(s^{\prime}\right)-\delta_{\mu, \widehat{\vomega}}^{\boldsymbol{\theta}}\left(s^{\prime}\right)\right) \int_{\mathcal{A}} \pi_{\boldsymbol{\theta}}\left(a \mid s^{\prime}\right) \int_{\mathcal{S}} P_{\vomega}\left(s \mid s^{\prime}, a\right) f(s) d s d a d s^{\prime}\right|\right\}}_{(1)} \nonumber\\
    & \quad +\gamma \underbrace{\sup _{f:\|f\|_{L} \leq 1}\left\{\left|\int_{\mathcal{S}} \delta_{\mu, \widehat{\vomega}}^{\boldsymbol{\theta}}\left(s^{\prime}\right) \int_{\pi}^{\boldsymbol{\theta}}\left(a \mid s^{\prime}\right) \int_{\mathcal{S}}\left(P_{\vomega}\left(s \mid s^{\prime}, a\right)-P_{\widehat{\vomega}}\left(s \mid s^{\prime}, a\right)\right) f(s) d s d a d s^{\prime}\right|\right\}}_{(2)}.
\end{align}
Now, we focus on term (1):
\begin{align}
&\sup _{f:\|f\|_{L} \leq 1}\biggl\{\biggl|\int_{\mathcal{S}}\left(\delta_{\mu, \vomega}^{\boldsymbol{\theta}}\left(s^{\prime}\right)-\delta_{\mu, \widehat{\vomega}}^{\boldsymbol{\theta}}\left(s^{\prime}\right)\right) \underbrace{\int_{\mathcal{A}} \pi_{\boldsymbol{\theta}}\left(a \mid s^{\prime}\right) \int_{\mathcal{S}} P_{\vomega}\left(s \mid s^{\prime}, a\right) f(s) d s d a}_{h_{f, \vomega}^{\boldsymbol{\theta}}\left(s^{\prime}\right)} d s^{\prime}\biggr| \biggr\} \nonumber \\
&=L_{P}(\vomega)\left(1+L_{\pi_{\boldsymbol{\theta}}}\right) \sup _{f:\|f\|_{L} \leq 1}\left\{\left|\int_{\mathcal{S}}\left(\delta_{\mu, \vomega}^{\boldsymbol{\theta}}\left(s^{\prime}\right)-\delta_{\mu, \widehat{\vomega}}^{\boldsymbol{\theta}}\left(s^{\prime}\right)\right) \frac{h_{f, \vomega}^{\boldsymbol{\theta}}\left(s^{\prime}\right)}{L_{P}(\vomega)\left(1+L_{\pi_{\boldsymbol{\theta}}}\right)} d s^{\prime}\right|\right\} \nonumber \\
&\leq L_{P}(\vomega)\left(1+L_{\pi_{\boldsymbol{\theta}}}\right) \sup _{\tilde{f}:\|\widetilde{f}\|_{L} \leq 1}\left\{\left|\int_{\mathcal{S}}\left(\delta_{\mu, \vomega}^{\boldsymbol{\theta}}\left(s^{\prime}\right)-\delta_{\mu, \widehat{\vomega}}^{\boldsymbol{\theta}}\left(s^{\prime}\right)\right) \tilde{f}\left(s^{\prime}\right) d s^{\prime}\right|\right\} \label{eq:delta_LC}\\
&\leq L_{P}(\vomega)\left(1+L_{\pi_{\boldsymbol{\theta}}}\right) \mathcal{K}\left(\delta_{\mu, \vomega}^{\boldsymbol{\theta}}, \delta_{\mu, \widehat{\vomega}}^{\boldsymbol{\theta}}\right).\nonumber
\end{align}
where Inequality \ref{eq:delta_LC} comes from the fact that $h_{f, \vomega}^{\boldsymbol{\theta}}\left(s^{\prime}\right):=\int_{\mathcal{A}} \pi_{\boldsymbol{\theta}}\left(a \mid s^{\prime}\right) \int_{\mathcal{S}} P_{\vomega}\left(s \mid s^{\prime}, a\right) f(s) d s d a$
is $L_{P}(\vomega)\left(1+L_{\pi_{\boldsymbol{\theta}}}\right)$-PLC w.r.t. the state space $\mathcal{S}$.
From the other side, the term (2) can be bounded as follows:
\begin{equation*}
    \begin{aligned}
    & \sup _{f:\|f\|_{L} \leq 1}\left\{\left|\int_{\mathcal{S}} \delta_{\mu, \widehat{\vomega}}^{\boldsymbol{\theta}}\left(s^{\prime}\right) \int_{\mathcal{A}} \pi_{\boldsymbol{\theta}}\left(a \mid s^{\prime}\right) \int_{\mathcal{S}}\left(P_{\vomega}\left(s \mid s^{\prime}, a\right)-P_{\widehat{\vomega}}\left(s \mid s^{\prime}, a\right)\right) f(s) d s d a d s^{\prime}\right|\right\} \\
    & \leq \int_{\mathcal{S}} \delta_{\mu, \widehat{\vomega}}^{\boldsymbol{\theta}}\left(s^{\prime}\right) \int_{\mathcal{A}} \pi_{\boldsymbol{\theta}}\left(a \mid s^{\prime}\right) \sup _{f:\|f\|_{L} \leq 1}\left\{\left|\int_{\mathcal{S}}\left(P_{\vomega}\left(s \mid s^{\prime}, a\right)-P_{\widehat{\vomega}}\left(s \mid s^{\prime}, a\right)\right) f(s) d s\right|\right\} d a d s^{\prime} \\
    & \leq \int_{\mathcal{S}} \delta_{\mu, \widehat{\vomega}}^{\boldsymbol{\theta}}\left(s^{\prime}\right) \int_{\mathcal{A}} \pi_{\boldsymbol{\theta}}\left(a \mid s^{\prime}\right) \mathcal{K}\left(P_{\vomega}\left(\cdot \mid s^{\prime}, a\right), P_{\widehat{\vomega}}\left(\cdot \mid s^{\prime}, a\right)\right) d a d s^{\prime} \\
    & \leq L_{\omega_{P}} d_{\Omega}(\vomega, \widehat{\vomega}).
    \end{aligned}
\end{equation*}
Finally, merging everything together:
\begin{equation*}
    \begin{aligned}
    \mathcal{K}\left(\delta_{\mu, \vomega}^{\boldsymbol{\theta}}, \delta_{\mu, \widehat{\vomega}}^{\boldsymbol{\theta}}\right) & \leq \gamma L_{P}(\vomega)\left(1+L_{\pi_{\boldsymbol{\theta}}}\right) \mathcal{K}\left(\delta_{\mu, \vomega}^{\boldsymbol{\theta}}, \delta_{\mu, \widehat{\vomega}}^{\boldsymbol{\theta}}\right)+\gamma L_{\omega_{P}} d_{\Omega}(\vomega, \widehat{\vomega}) \\
    & \leq \frac{\gamma L_{\omega_{P}}}{1-\gamma L_{P}(\vomega)\left(1+L_{\pi_{\boldsymbol{\theta}}}\right)} d_{\Omega}(\vomega, \widehat{\vomega}).
    \end{aligned}
\end{equation*}
\end{proof}
As a direct consequence, we define the joint probability  $\zeta(\delta_{\mu,\pi},\pi)$ between the state distribution $\delta_{\mu,\pi}$ and the stationary policy $\pi$. In the case of a parametric policy and a $\vomega$-based MDP, we will denote it as $\zeta_{\mu,\vomega}^{\vtheta}$.It is then easy to prove that:
\begin{equation*}
    \mathcal{K}\left(\zeta_{\mu, \vomega}^{\boldsymbol{\theta}}, \zeta_{\mu, \widehat{\vomega}}^{\boldsymbol{\theta}}\right) \leq L_{\delta}(\vomega)\left(1+L_{\pi_{\boldsymbol{\theta}}}\right) d_{\Omega}(\vomega, \widehat{\vomega}).
\end{equation*}
\begin{proof}
\begin{align}
\mathcal{K}\left(\zeta_{\mu, \vomega}^{\boldsymbol{\theta}}, \zeta_{\mu, \widehat{\vomega}}^{\boldsymbol{\theta}}\right) &= \sup _{f:\|f\|_{L} \leq 1}\left\{\left\lVert \int_{\mathcal{S}} \delta_{\mu, \vomega}^{\boldsymbol{\theta}}(s) \int_{\mathcal{A}} \pi_{\boldsymbol{\theta}}(a \mid s) f(s, a) da ds\right.\right. \nonumber\\
& \quad \left.\left. -\int_{\mathcal{S}} \delta_{\mu, \widehat{\vomega}}^{\boldsymbol{\theta}}(s) \int_{\mathcal{A}} \pi_{\boldsymbol{\theta}}(a \mid s) f(s, a) da ds \right\rVert\right\} \nonumber\\
&= \sup _{f:\|f\|_{L} \leq 1}\left\{\norm{\int_{\mathcal{S}}\left(\delta_{\mu, \vomega}^{\boldsymbol{\theta}}(s)-\delta_{\mu, \widehat{\vomega}}^{\boldsymbol{\theta}}(s)\right) \int_{\mathcal{A}} \pi_{\boldsymbol{\theta}}(a \mid s) f(s, a) da ds}\right\} \nonumber\\
&\leq \left(1+L_{\pi_{\boldsymbol{\theta}}}\right) \mathcal{K}\left(\delta_{\mu, \vomega}^{\boldsymbol{\theta}}, \delta_{\mu, \widehat{\vomega}}^{\boldsymbol{\theta}}\right)\label{eq:zeta_LC}.
\end{align}
where in \ref{eq:zeta_LC} we used the fact that, for a function $f$ defined on $\mathcal{S} \times \mathcal{A}$ such that $\|f\|_{L} \leq 1$, then $\int_{\mathcal{A}} \pi_{\boldsymbol{\theta}}(a \mid s) f(s, a) d a$ is $\left(1+L_{\pi_{\boldsymbol{\theta}}}\right)$-LC.
\end{proof}
\begin{lemma}[L-continuity of $\eta$]
Given Assumptions \ref{ass:LipMDP}, \ref{as:lip_metamdp}, \ref{as:theta_LC} and \ref{as:grad_LC},  $\eta_{i, \vomega}^{\boldsymbol{\theta}}(s, a)\coloneqq\nabla_{\boldsymbol{\theta}_{i}} \log \pi_{\boldsymbol{\theta}}(s, a) Q^{\boldsymbol{\theta}}_{\vomega}(s, a)$ is L-CLC w.r.t. the context $\vomega$:
\begin{equation*}
    \begin{aligned}
    \left|\eta_{i, \vomega}^{\boldsymbol{\theta}}(s, a)-\eta_{i, \widehat{\vomega}}^{\boldsymbol{\theta}}(s, a)\right| &=\left|\nabla_{\boldsymbol{\theta}_{i}} \log \pi_{\boldsymbol{\theta}}(a \mid s)\left(Q_{\vomega}^{\boldsymbol{\theta}}(s, a)-Q^{\boldsymbol{\theta}}_{\widehat{\vomega}}(s, a)\right)\right| \\
    & \leq \mathcal{M}_{\boldsymbol{\theta}}^{i}\left|Q_{\vomega}^{\boldsymbol{\theta}}(s, a)-Q_{\widehat{\vomega}}^{\boldsymbol{\theta}}(s, a)\right| \\
    & \leq \mathcal{M}_{\boldsymbol{\theta}}^{i} L_{\omega_{Q}} d_{\Omega}(\vomega, \widehat{\vomega}).
    \end{aligned}
\end{equation*}
Moreover, $\eta$ is also L-LC w.r.t. the joint state-action space $\mathcal{S} \times \mathcal{A}$:
\begin{equation*}
    \left|\eta_{i, \vomega}^{\boldsymbol{\theta}}(s, a)-\eta_{i, \vomega}^{\boldsymbol{\theta}}(\widehat{s}, \widehat{a})\right| \leq L_{\eta^{\boldsymbol{\theta}}(\vomega)}^{i} d_{\mathcal{S} \times \mathcal{A}}((s, a),(\widehat{s}, \widehat{a}));
\end{equation*}
where $L_{\eta^{\boldsymbol{\theta}}(\vomega)}^{i}= \frac{R_{\max }}{1-\gamma} L_{\nabla \log \pi_{\boldsymbol{\theta}}}^{i}+\mathcal{M}_{\boldsymbol{\theta}}^{i} L_{Q^{\boldsymbol{\theta}}(\vomega)}$.
\end{lemma}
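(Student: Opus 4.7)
The plan is to split the statement into its two claims and prove each by a short direct estimate, since the heavy machinery (Lipschitz continuity of $Q_{\vomega}^{\vtheta}$ both in $\vomega$ and in $(s,a)$) has already been established earlier in the paper.

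For the context-CLC part, the argument is essentially written out in the statement itself: fixing $(s,a)$ and $\vtheta$, the factor $\nabla_{\vtheta_i}\log\pi_{\vtheta}(a\mid s)$ is independent of $\vomega$, so one just pulls it out in absolute value. The uniform bound $|\nabla_{\vtheta_i}\log\pi_{\vtheta}(a\mid s)|\le\mathcal{M}_{\vtheta}^{i}$ from Assumption \ref{as:grad_LC}(1) converts the problem into bounding $|Q_{\vomega}^{\vtheta}(s,a)-Q_{\widehat{\vomega}}^{\vtheta}(s,a)|$, which is $L_{\omega_Q}(\pi_{\vtheta})\,d_{\Omega}(\vomega,\widehat{\vomega})$ by Theorem \ref{thm:lip_q_omega}. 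Combining the two factors yields the claim.

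For the joint state-action LC part, the natural route is the classical add-and-subtract trick for a product of two functions. I would write
\[
\eta^{\vtheta}_{i,\vomega}(s,a)-\eta^{\vtheta}_{i,\vomega}(\widehat{s},\widehat{a})
= \bigl[\nabla_{\vtheta_i}\log\pi_{\vtheta}(a\mid s)-\nabla_{\vtheta_i}\log\pi_{\vtheta}(\widehat{a}\mid \widehat{s})\bigr]Q_{\vomega}^{\vtheta}(s,a)
+\nabla_{\vtheta_i}\log\pi_{\vtheta}(\widehat{a}\mid \widehat{s})\bigl[Q_{\vomega}^{\vtheta}(s,a)-Q_{\vomega}^{\vtheta}(\widehat{s},\widehat{a})\bigr],
\]
apply the triangle inequality, and bound each piece. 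The first piece uses the state-action LC of $\nabla\log\pi_{\vtheta}$ from Assumption \ref{as:grad_LC}(2) together with the standard uniform bound $|Q_{\vomega}^{\vtheta}(s,a)|\le R_{\max}/(1-\gamma)$. The second piece uses the uniform bound on $|\nabla_{\vtheta_i}\log\pi_{\vtheta}|$ from Assumption \ref{as:grad_LC}(1) and the $L_{Q^{\vtheta}(\vomega)}$-LC of the action-value function in $(s,a)$ provided by Lemma \ref{lem:Q_lip}. Summing the two bounds gives exactly the claimed constant $L_{\eta^{\vtheta}(\vomega)}^{i}=\frac{R_{\max}}{1-\gamma}L^{i}_{\nabla\log\pi_{\vtheta}}+\mathcal{M}_{\vtheta}^{i}L_{Q^{\vtheta}(\vomega)}$.

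There is no real obstacle; the only care needed is to make sure Lemma \ref{lem:Q_lip}'s hypothesis $\gamma L_P(\vomega)(1+L_{\pi_{\vtheta}})<1$ holds so that $L_{Q^{\vtheta}(\vomega)}$ is well-defined (this will be silently inherited as a standing assumption, exactly as in Theorem \ref{thm:lip_q_omega}), and that the joint-space metric $d_{\mathcal{S}\times\mathcal{A}}$ is the same one used across all three invoked results. Both are already consistent in the paper, so each part of the lemma follows in a few lines without extra work.
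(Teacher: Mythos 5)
Your proposal is correct and follows essentially the same route as the paper: the context-CLC part is exactly the chain of inequalities the paper displays (bound the score by $\mathcal{M}_{\boldsymbol{\theta}}^{i}$, then apply Theorem~\ref{thm:lip_q_omega}), and the state-action part is the standard add-and-subtract product decomposition whose two terms reproduce precisely the stated constant $\frac{R_{\max}}{1-\gamma}L^{i}_{\nabla\log\pi_{\boldsymbol{\theta}}}+\mathcal{M}_{\boldsymbol{\theta}}^{i}L_{Q^{\boldsymbol{\theta}}(\vomega)}$ via Assumption~\ref{as:grad_LC} and Lemma~\ref{lem:Q_lip}. Your remark about the standing condition $\gamma L_{P}(\vomega)(1+L_{\pi_{\boldsymbol{\theta}}})<1$ is the right caveat and is consistent with how the paper handles it.
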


\begin{restatable}[]{thr}{grad_J_lip}\label{thm:lc_gradient_j}
Finally, given Assumptions \ref{ass:LipMDP}, \ref{as:lip_metamdp}, \ref{as:theta_LC} and \ref{as:grad_LC}, the return gradient is CLC w.r.t. the context $\vomega$:
\begin{equation*}
    \left|\nabla_{\boldsymbol{\theta}_{i}} j_{ \vomega}(\boldsymbol{\theta})-\nabla_{\boldsymbol{\theta}_{i}} j_{ \widehat{\vomega}}(\boldsymbol{\theta})\right| \leq L_{\nabla j}(\vomega) d_{\Omega}(\vomega, \widehat{\vomega});
\end{equation*}
where $L_{\nabla j}(\vomega)=L_{\eta_{\omega}^{\boldsymbol{\theta}}}^{i}\left(1+L_{\pi_{\boldsymbol{\theta}}}\right) L_{\delta}(\vomega)+\mathcal{M}_{\boldsymbol{\theta}}^{i} L_{\omega_{Q}}$.
\end{restatable}
\begin{proof}
\begin{equation*}
    \begin{aligned}
    &\left|\nabla_{\boldsymbol{\theta}_{i}} j_{ \vomega}(\boldsymbol{\theta})-\nabla_{\boldsymbol{\theta}_{i}} j_{ \widehat{\vomega}}(\boldsymbol{\theta})\right|=\biggl|\underset{(s, a) \sim \zeta_{\mu, \vomega}^{\boldsymbol{\theta}}}{\mathbb{E}}\left[\eta_{i, \vomega}^{\boldsymbol{\theta}}(s, a)\right]-\underset{(s, a) \sim \zeta_{\mu, \widehat{\vomega}}^{\boldsymbol{\theta}}}{\mathbb{E}}\left[\eta_{i, \widehat{\vomega}}^{\boldsymbol{\theta}}(s, a)\right] \biggr| \\
    &\leq\left|\int_{\mathcal{S}} \int_{\mathcal{A}}\left(\zeta_{\mu, \vomega}^{\boldsymbol{\theta}}-\zeta_{\mu, \widehat{\vomega}}^{\boldsymbol{\theta}}\right)(s, a) \eta_{i, \vomega}^{\boldsymbol{\theta}}(s, a) d a d s\right|+\biggl|\underset{(s, a) \sim \zeta_{\mu, \widehat{\vomega}}^{\boldsymbol{\theta}}}{\mathbb{E}}\left[\eta_{i, \vomega}^{\boldsymbol{\theta}}(s, a)-\eta_{i, \widehat{\vomega}}^{\boldsymbol{\theta}}(s, a)\right] \biggr| \\
    &\leq L_{\eta^{\boldsymbol{\theta}}(\vomega)}^{i} \mathcal{K}\left(\zeta_{\mu, \vomega}^{\boldsymbol{\theta}}, \zeta_{\mu, \widehat{\vomega}}^{\boldsymbol{\theta}}\right)+\left|\int_{\mathcal{S}} \delta_{\mu, \widehat{\vomega}}^{\boldsymbol{\theta}}(s) \int_{\mathcal{A}} \pi_{\boldsymbol{\theta}}(a \mid s)\left(\eta_{i, \vomega}^{\boldsymbol{\theta}}(s, a)-\eta_{i, \widehat{\vomega}}^{\boldsymbol{\theta}}(s, a)\right) d a d s\right|\\
    &\leq\left[L_{\eta_{\omega}^{\boldsymbol{\theta}}}^{i}\left(1+L_{\pi_{\boldsymbol{\theta}}}\right) L_{\delta}(\vomega)+\mathcal{M}_{\boldsymbol{\theta}}^{i} L_{\omega_{Q}}\right] d_{\Omega}(\vomega, \widehat{\vomega}).
    \end{aligned}
\end{equation*}
\end{proof}

\section{Experiment Details}\label{app:exps}
In this section, we provide more details regarding the experimental campaign provided. In the following environments, all the policies considered are Gaussian, and linear w.r.t. the state observed (with bias $\theta_0$), i.e. $\pi_{\vtheta}(a|s)\sim\mathcal{N}(\theta_0 + \vtheta^\top s, \sigma^2)$, where $\sigma$ is fixed standard deviation, with a different setting for each environment.

\vspace{0.15cm}
\textit{Infrastructure}
The experiments have been run on a machine with two CPUs Intel(R) Xeon(R) CPU E7-8880 v4 @2.20GHz (22 cores, 44 thread, 55 MB cache) and 128 GB RAM.

\subsection{Navigation2D Description}\label{subapp:nav2d}
The Navigation2D environment consists of a 2-dimensional square space in which an agent, represented as a point, aims to reach a goal in the plane traversing the minimum distance. 

At the start of the episode, the agent is placed in the initial position $s_{0} = (0, 0)$. Then, at each step $t$ the agent observes its current position and performs an action $a_{t}$ corresponding to movement speeds along the $x$ and $y$ axes:
\begin{equation}\label{eq6.12}
    a_{t} = (v_{x}, v_{y}), \text{ where } v_{x}, v_{y} \in [-v^{\max}, v^{\max}].
\end{equation}
According to this action, the agent can move in every direction of the plane, with a limit on the maximum speed $v^{\max}=0.1$ allowed in a single step. This parameters determines the minimum number of steps necessary to reach the goal and can be varied to tune the difficulty of the environment.

At each step, the environment produces a reward equal to the negative Euclidean distance from the goal:
\begin{equation}\label{eq6.13}
    r_{t} = \sqrt{(x_t - x_{\text{goal}})^2 + (y_t - y_{\text{goal}})^2}.
\end{equation}
An episode terminates when the agent is within a threshold distance $d_{\text{thresh}}$ from the goal or when the horizon $H=10$ is reached.

The distribution of tasks is implemented as a CMDP $\mathcal{M}(\vomega)$ in which, at each episode, a different goal point is selected at random. The context $\vomega$ is given by a 2D vector, such that:
\begin{equation}
    \vomega=(x_{\text{goal}}, y_{\text{goal}}), \text{ where } x_{\text{goal}}, y_{\text{goal}} \sim U(-1, 1).
\end{equation}

Parameters used for experiments:
\begin{itemize}
\itemsep0em
    \item initial policy distribution $\rho = \mathcal{N}(0, 0.1)$;
    \item discount factor $\gamma=0.99$;
    \item policy standard deviation $\sigma=1.001$; 
    \item task distribution $\psi = \mathcal{U}([-0.5,0.5]^2)$;
    \item meta-discount factor $\widetilde{\gamma}=1$;
    \item FQI dataset method: trajectories;
    \item FQI number of samples: $K=4000$ with learning horizon $T=20$;
    \item inner trajectories $n=200$ with horizon $H=10$; 
    \item number of estimators = 50, minimum samples split = 0.01;
    \item step size $\mathcal{H}= [0, 8]$;
    \item step size sampling distribution: uniform in $\mathcal{H}$;
    \item step size selected in evaluation from an evenly spaced discretization of 101 values in $\mathcal{H}$.
\end{itemize}

\subsection{Minigolf Description}\label{subapp:minigolf}
In the minigolf game, the agent has to shoot a ball with radius $r$ inside a hole of diameter $D$ with the smallest number of strokes. 
The friction imposed by the green surface is modeled by a constant deceleration $d=\frac{5}{7} \rho g$, where $\rho$ is the dynamic friction coefficient between the ball and the ground and $g$ is the gravitational acceleration.
Given the distance $x$ of the ball from the hole, the agent must choose the force $a$, from which the velocity of the ball $v$ of the ball is determined as $v=al^2(1+\epsilon)$, where $\epsilon\sim\mathcal{N}(0,0.25)$ and $l$ is the putter length.
 For each distance $x$, the ball falls in the hole if its velocity $v$ ranges from $v_{min}=\sqrt{2dx}$ to $v_{max}=\sqrt{(2D-r)^2\frac{g}{2r}+v^2_{min}}$. In this case, the episode ends with a reward 0;
 if $v > v_{max}$ the ball falls outside the green, and the episode ends with a reward -100. Otherwise, if $v<v_{min}$, the agents gets a reward equal to -1, and the episode goes on from a new position $x_{new} = x_{old}-\frac{v^2}{2d}$. 
At the beginning of each episode, the initial position is selected from an uniform distribution between 0m and 20m from the hole. 
The stochasticity of the action implies that the stronger is the action chosen the more uncertain is the outcome, as the effect of r.v. $\epsilon$ become more effective. As a result, when it is away from the hole, the agent might not prefer to try to make a hole in one shot, preferring to perform a sequence of closer shots. In this case, the context is given by the friction coefficient $\rho\in[0.065, 0.196]$ and by the putter length $l\in[0.7,1]m$.
 
During the experiment, the environment parameters are set to imitate the dynamics of a realistic shot in a minigolf green, within the limits of our simplified simulation. This is the complete configuration adopted:
\begin{itemize}
\itemsep0em
    \item horizon $H = 20$;
    \item discount factor $\gamma=0.99$;
    \item angular velocity $\vomega \in  [1 \times 10^{-5}, 10]$;
    \item initial distance $x_{0} \in [0, 20] \text{ meters}$;
    \item ball radius $r = 0.02135 \text{ meters}$;
    \item hole diameter $D = 0.10 \text{ meters}$;
    \item gravitational acceleration $g=9.81 \frac{\text{meters}}{\text{second}^2}$.
\end{itemize}
The distribution of tasks is built as a CMDP $\mathcal{M}(\vomega)$, induced by the pair $\vomega = (l, \rho)$. At each meta episode, a new task is sampled from a multivariate uniform distribution within this ranges:
\begin{itemize}
\itemsep0em
    \item putter length $l \sim U(0.7, 1) \text{ meters}$;
    \item friction coefficient $\rho \sim U(0.065,0.196)$.
\end{itemize}

Parameters used for experiments:
\begin{itemize}
\itemsep0em
    \item initial policy distribution $\boldsymbol{\theta}=(w,b) \sim U((-1,2),(-2,3.5))$ (2-dimensional policy);
    \item policy standard deviation $\sigma=0.1$; 
    \item meta-discount factor $\widetilde{\gamma}=1$;
    \item FQI dataset method: generative;
    \item FQI number of samples: $K=10000$;
    \item inner trajectories $n=400$ with horizon $H=20$; 
    \item number of estimators = 50, minimum samples split = 0.01;
    \item step size space: $\mathcal{H} = [0, 1]$
    \item step size sampling distribution: uniform in $\mathcal{H}$;
    \item step size selected in evaluation from an evenly spaced discretization of 101 values in $\mathcal{H}$.
\end{itemize}

\subsection{CartPole description}\label{subapp:cartpole}
The CartPole environment \cite{barto83cartpole}, also known as the Inverted Pendulum problem, consists in a pole attached to a cart by a non actuated joint, making it an inherently unstable system. The cart can move horizontally along a frictionless track to balance the pole. The objective is to maintain the equilibrium as long as possible.

In this implementation, an episode starts with the pendulum in vertical position. At each step, the agent observes the following 4-tuple of continuous values:
\begin{itemize}
\itemsep0em
    \item cart position $x_{\text{cart}} \in [-4.8, 4.8]$;
    \item cart velocity $v_{\text{cart}} \in \mathbb{R}$;
    \item pole angle $\phi_{\text{pole}} \in [-0.418, 0.418]$ rad;
    \item pole angular velocity $\vomega_{\text{pole}} \in \mathbb{R}$.
\end{itemize}
Given the state, the agent chooses an action between 0 and 1 to push the cart to the left or to the right. For each step in which the pole is in balance, the environment produces a reward of +1. An episode ends when the pole angle from the vertical position is higher than 12 degrees, or the cart moves more than 2.4 units from the center, or the horizon $H=100$ is reached.

In our experiments, we set the environment parameters to these values:
\begin{itemize}
\itemsep0em
\item mass of the cart $m_{cart} = 1$ kg;
\item length of the pole $l_{pole} = 0.5$ m;
\item force applied by the cart $F = 10$ N.
\end{itemize}
The CMDP $\mathcal{M}(\vomega)$ is induced by varying two environment parameters, the pole mass $m_{pole}$ and the pole length $l_{pole}$, that form the context parameterization $\vomega=(m_{pole}, l_{pole})$. Each task in the meta-MDP is built by sampling $\vomega$ from a multivariate uniform distribution, within these ranges:
\begin{itemize}
\itemsep0em
    \item pole length $l_{pole} \sim U(0.5,1.5)\mathrm{m}$;
    \item pole mass $m_{pole} \sim U(0.1, 2)$ kg.
\end{itemize}

Parameters used for experiments:
\begin{itemize}
\itemsep0em 
    \item initial policy distribution $\boldsymbol{\theta_d} \sim \mathcal{N}(0, 0.01)$ for each component $\vtheta_d$;
    \item policy standard deviation $\sigma=1.001$; 
    \item meta-discount factor $\widetilde{\gamma}=1$;
    \item FQI dataset method: trajectories;
    \item FQI number of samples: $K=3200$ with learning horizon $T=15$;
    \item inner trajectories $n=100$ with horizon $H=100$; 
    \item number of estimators = 150, minimum samples split = 0.05;
    \item step size $\mathcal{H}=[0, 10]$;
    \item step size sampling distribution: uniform in $\mathcal{H}$;
    \item step size selected in evaluation from an evenly spaced discretization of 101 values in $\mathcal{H}$.
\end{itemize}

\subsection{Half Cheetah description}\label{subapp:half_cheetah}
The CMDP $\mathcal{M}(\vomega)$ is induced by varying the goal velocity of the half cheetah $v_{goal}$, which defines the context $\vomega$, with uniform distribution $U(0,2)$.

Parameters used for experiments:
\begin{itemize}
\itemsep0em
    \item initial policy distribution $\boldsymbol{\theta_d} \sim \mathcal{N}(0, 0.1)$ for each component $\vtheta_d$;
    \item policy standard deviation $\sigma=1.001$; 
    \item meta-discount factor $\widetilde{\gamma}=1$;
    \item FQI dataset method: trajectories;
    \item FQI number of samples: $K=200$ with learning horizon $T=500$ ($T=80$ for the comparison against benchmarks);
    \item inner trajectories $n=100$ with horizon $H=100$; 
    \item number of estimators = 150, minimum samples split = 0.05;
    \item step size $\mathcal{H}=[0, 1]$;
    \item step size sampling distribution: uniform in $\mathcal{H}$;
    \item step size selected in evaluation from an evenly spaced discretization of 101 values in $\mathcal{H}$.
\end{itemize}

\subsection{Metagrad implementation}
In Figure \ref{fig:learning_dec}, we provided the comparison of the proposed approach with meta-gradient \cite{xu2018meta}. The algorithm performs an online update following the gradient of the (differentiable) return function w.r.t. the hyperparameter set.
In particular, after updating the current parameter set $\vtheta$ to $\vtheta'$ following the update rule $f(\vtheta, h, \tau)$, the hyperparameter gradient of the return function on a new batch of trajectories $\tau'$ is approximated as in Equation \ref{eq:metagrad}:
\begin{equation}\label{eq:metagrad}
\frac{\partial J(\vtheta', h,\tau')}{\partial h} = \frac{\partial J(\vtheta', h,\tau')}{\partial \vtheta'}\frac{d \vtheta'}{dh}\approx \frac{\partial J(\vtheta', h,\tau')}{\partial \vtheta'} z',
\end{equation}
where $z\approx \frac{d\vtheta}{dh}$ is update as an accumulative trace with parameter $\mu$:
$$ z' = \mu z + \frac{\partial f(\vtheta, h, \tau)}{\partial h}.$$
In the original paper, the optimized hyperparameters (as well as in \cite{yu2006fast} with HOOF implementation) were the discount factor $\gamma$ and the exponential weight coefficient $\lambda$ related to the generalized advantage estimation. In our case, the hyperparameter considered is the stepsize. 

Hence, from the NGA update function it follows that:
\begin{align*}
    \vtheta' &= \vtheta + h \frac{\widehat{g}_N(\vtheta, \vomega)}{\|\widehat{g}_N(\vtheta, \vomega)\|_2}\\
    z' &= \mu z + \frac{\widehat{g}_N(\vtheta, \vomega)}{\|\widehat{g}_N(\vtheta, \vomega)\|_2}.
\end{align*}

After the update, the stepsize is update through a meta-hyperparameter $\beta$ and a new batch of trajectories with policy $\pi_{\vtheta'}$:

\begin{align*} 
h' &= h-\beta \frac{\widehat{g}_N(\vtheta', \vomega)}{\|\widehat{g}_N(\vtheta', \vomega)\|_2}z' \\
&= h- \beta \frac{\widehat{g}_N(\vtheta', \vomega)}{\|\widehat{g}_N(\vtheta', \vomega)\|_2}(\mu z + \frac{\widehat{g}_N(\vtheta, \vomega)}{\|\widehat{g}_N(\vtheta, \vomega)\|_2})
\end{align*}
 In the case $\mu=0$, as adopted in \cite{xu2018meta}, the stepsize update function reduces to computing the cosine similarity between consecutive gradients:
 $$h'= h-\beta\ \  \text{sim} \Big(\widehat{g}_N(\vtheta', \vomega),\widehat{g}_N(\vtheta, \vomega)\Big),$$
where $\text{sim}(x,y)$ denotes the cosine similarity between vectors $x$ and $y$.


\section{Other results}
In this section of the appendix, we provide more experimental results.

\subsection*{Meta Cartpole SwingUp}
For the experimental session, a variant of the Cartpole presented in Section \ref{sec:experiments} is the Cartpole Swingup variant, introduced in \cite{tornio2006variational} and implemented in \cite{duan2016benchmarking}. The main difference is the following: classic CartPole environment provides unitary reward per step until the end of the episode, which ends when the pole angle from the vertical axis $\phi_{pole}$ is more than 12 degrees from vertical, or the cart moves more than $x_{thresh}=2.4$ units from the center. CartPole Swingup, instead, has a reward equal to $\cos(\phi_{pole})$,  and equal to $-100$ if the cart threshold $x_{thresh}=3$ is reached.
Finally, the CMDP in this case is built by changing only the pole mass $m_{pole}$ with a uniform distribution $\sim U(0.1,2)$.

Parameters used for experiments:
\begin{itemize}
\itemsep0em 
    \item initial policy distribution $\boldsymbol{\theta_d} \sim \mathcal{N}(0, 0.1)$ for each component $\vtheta_d$;
    \item policy standard deviation $\sigma=1.001$; 
    \item meta-discount factor $\widetilde{\gamma}=1$;
    \item FQI dataset method: trajectories;
    \item FQI number of samples: $K=300$ with learning horizon $T=25$;
    \item inner trajectories $n=100$ with horizon $H=200$; 
    \item number of estimators = 150, minimum samples split = 0.05;
    \item step size $\mathcal{H}=[0, 0.5]$;
    \item step size sampling distribution: uniform in $\mathcal{H}$;
    \item step size selected in evaluation from an evenly spaced discretization of 101 values in $\mathcal{H}$.
\end{itemize}

The results are depicted in Figure \ref{fig:swingup}: the model chosen is the one which maximizes the reward, i.e. $N=1$: however, all iterations have similar performances, which resemble the choice of a fixed learning rate equal to 0.1. Indeed, the actions chosen are almost always around this value, with the exception of the first two steps, where higher step sizes are taken into account (with a slightly better     learning).

\begin{figure}[t]
\centering
	\includegraphics[width=.99\textwidth]{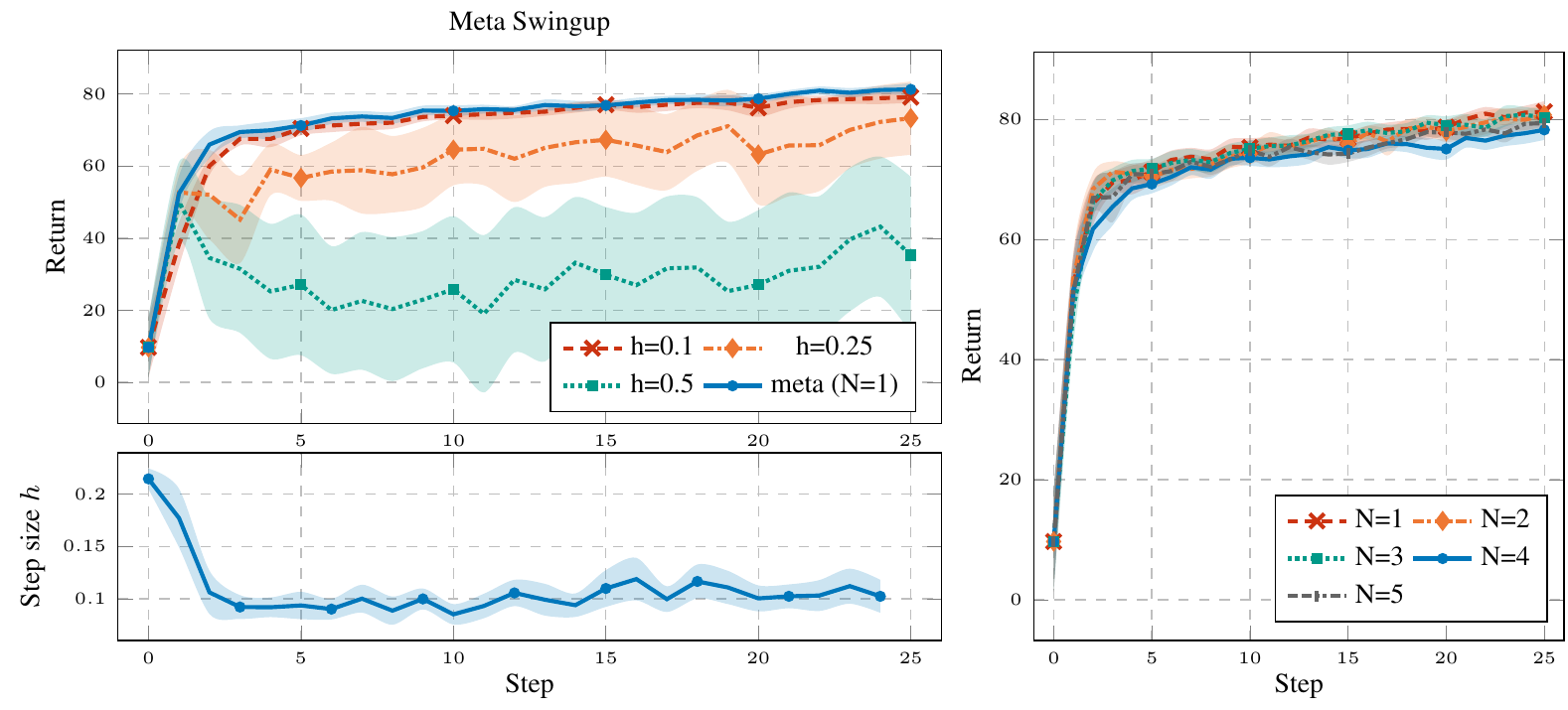}
	\caption{Meta Swingup FQI model performance on 20 random test context and initial policies against fixed step sizes. The top left plot shows the 95\% confidence intervals of the expected returns. The bottom left plot shows the  meta action chosen through learning iterations. $N$ represents the FQI iteration selected. The right plot shows the performance among different FQI iterations.}
	\label{fig:swingup}
\end{figure}

\subsection*{Comparison among FQI Iterations.}
As said, as the regression procedures are iterated in the application of FQI algorithm, there is a trade-off between a larger planning horizon and the accumulation of new regression errors.
In Figure \ref{fig:fqi_it} we show some of the learning curves with different FQI iterations. For all the environments considered, it is possible to see that the direct regression on the meta reward (i.e. one FQI iteration) does not provide the best performances, while from a certain point the results start to get worse. As far as Meta Cartpole environment is concerned, we can clearly see that the models select progressively more cautious steps in order to improve learning, as explained in Section \ref{sec:experiments}.

\begin{figure}[t]
\centering
	\includegraphics[width=.99\textwidth]{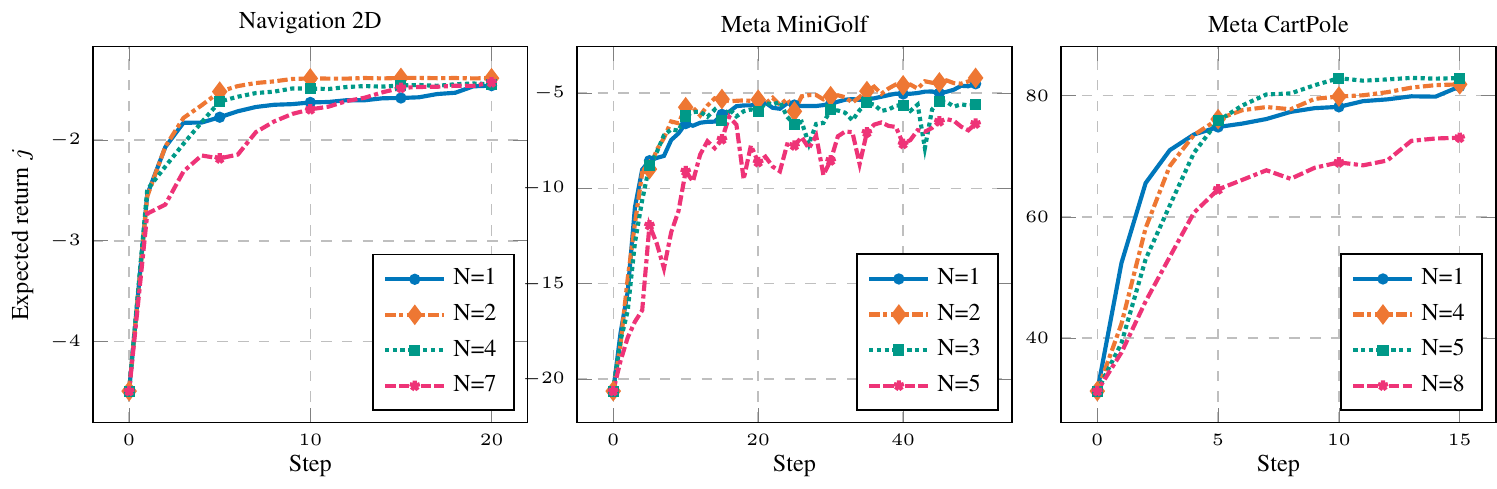}
	\caption{FQI model performance among different iterations. For the sake of clarity, only the average values are shown.}
	\label{fig:fqi_it}
\end{figure}

\subsection*{Comparison with learning rate schedules: details } 
In Figure \ref{fig:learning_dec}, we compared our approach with three different baselines, where the initial learning rate (denoted as $\alpha$) was tuned by grid search on 20 random test contexts and initial policies, and the best were selected for the comparison. 
Adam and RMSprop updates have a poor performance when applied to the natural gradient $g(\boldsymbol{\theta})$, hence they have been tuned by adopting the (standard) stochastic gradient $\widehat{\nabla}_N j(\boldsymbol{\theta})$. 
While for the decaying learning rate the only hyperparameter is the initial rate, the other methods depend also on other variables, which were kept fixed to the suggested values: for RMSProp, the parameters were fixed as $\rho=0.9, \epsilon=1e-7$, while for Adam the parameters were fixed as $\beta_1=0.9, \beta_2=0.999, \epsilon=1e-7$. The notation for these parameters follows the one used in the implementations of the optimizers within Python Keras API \cite{gulli2017deep}, used to perform the updates. 

As far as meta-gradient is concerned, the algorithm shows heavy dependence on the initial stepsize $h_0$ selected, while the impact of the meta-stepsize $\beta$ is reduced. $\mu$ is always set to 0.  
HOOF has been tested by selecting the KL-constraints$\epsilon$ in the set $[0.0001,0.001,0.01,0.02,0.05]$, and by sampling $Z=100$ candidate hyperparameters per iteration in the meta-action space. 

Another common stepsize schedule adopted to grant convergence is a  decaying step size $h_{t+1}=\frac{\alpha}{t}$ (similar as is an exponentially decreasing learning rate $h_{t+1} = \alpha h_t$), where $h_0$ is the initial learning rate. The comparison of the model trained through FQI and this baselines is shown in Figure \ref{fig:decay}, while the best initial learning rates chosen for each of the environments and of the baselines (and shown in Figure \ref{fig:learning_dec}) can be found in Table \ref{tab:bench}.
\begin{figure}[t]
\centering
	\includegraphics[width=\textwidth]{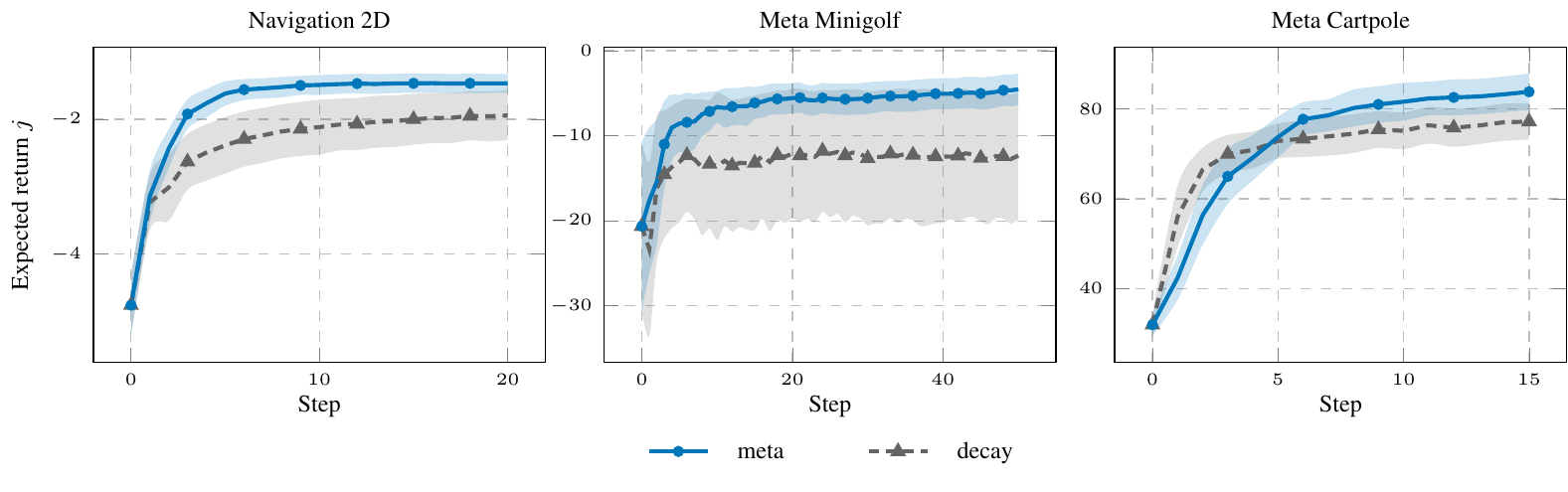}
	\caption{FQI model performance against an exponentially decreasing (\textit{decaying}) learning rate.  20 runs, avg$\pm$ 95\% c.i.}
	\label{fig:decay}
\end{figure}

\begin{table*}[t]
\caption{Best initial learning rate selected. Evaluation using 20 different random tasks and policies.}
\label{tab:bench}
\begin{center}
\scriptsize
\begin{tabular}{ccccc}
\toprule
 & Nav2D & Meta MiniGolf & MetaCartpole  & Half-Cheetah\\
\midrule
 \textit{RMSprop} & 0.9 & 0.3 & 0.3 & 0.3\\
 \textit{Adam} & 0.8 & 0.08 & 0.3 & 0.5\\
 \textit{Metagrad: $h_0$} & 3 & 0.3 & 1 & 0.5\\
 \textit{Metagrad: $\beta$} & 0.001 & 5 & 0.1 & 0.01\\
 \textit{decay} & 5 & 2 & 7.5 & N/A\\
\bottomrule
\end{tabular}
\end{center}
\vspace{-.4cm}
\end{table*}

\subsection*{Cartpole: Extension of trajectory length}
In Figure \ref{fig:learning} we have shown the performance of FQI models trained on Cartpole trajectories with horizon $T=15$ update steps. In order to have a fair comparison, we have tested the resulting FQI model (and NGA with fixed step sizes) performing the same number of total updates as the training trajectories. However, as the learning curves were far from convergence, one may ask what happens if the horizon is increased: Figure \ref{fig:cart_extended} depicts the performance of the same models (trained on $T=15$-steps long trajectories) with an increased horizon of $60$ steps.
\begin{figure}[t]
\centering
	\includegraphics[width=.5\textwidth]{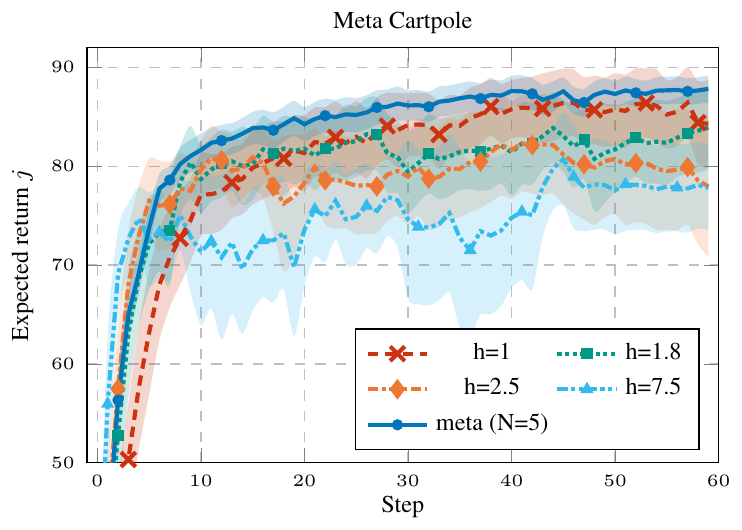}
	\caption{FQI model performance in Cartpole against NGA with fixed size $h$. Train with $T=15$ steps. Test on 60 updates. The red dashed line represents the best NGA model, with $h=1.8$. 20 runs, avg$\pm$ 95\% c.i.}
	\label{fig:cart_extended}
\end{figure}

\subsection*{Explicit knowledge of the context: is it informative?}
In the experimental campaign, we assumed to be able to represent the parametrized context $\vomega$, as this information can be used to achieve an \textit{implicit task-identification} by the agent. However, in some cases the external variables influencing the process might be not observable. Hence, the Meta-MDP can be modeled by creating a different task representation. However, the gradient itself already implicitly includes information regarding the transition and reward probabilities: what is lost when we do not consider the explicit parametrization of the task? We address this question by retraining our models, and showing the results in Figure \ref{fig:task:notask}: in general, there is no big loss in the performance, especially for Minigolf environment; however, in Meta CartPole, the task parametrization seem to be informative to the choice of the step size. 

\begin{figure}[t]
\centering
	\includegraphics[width=.9\textwidth]{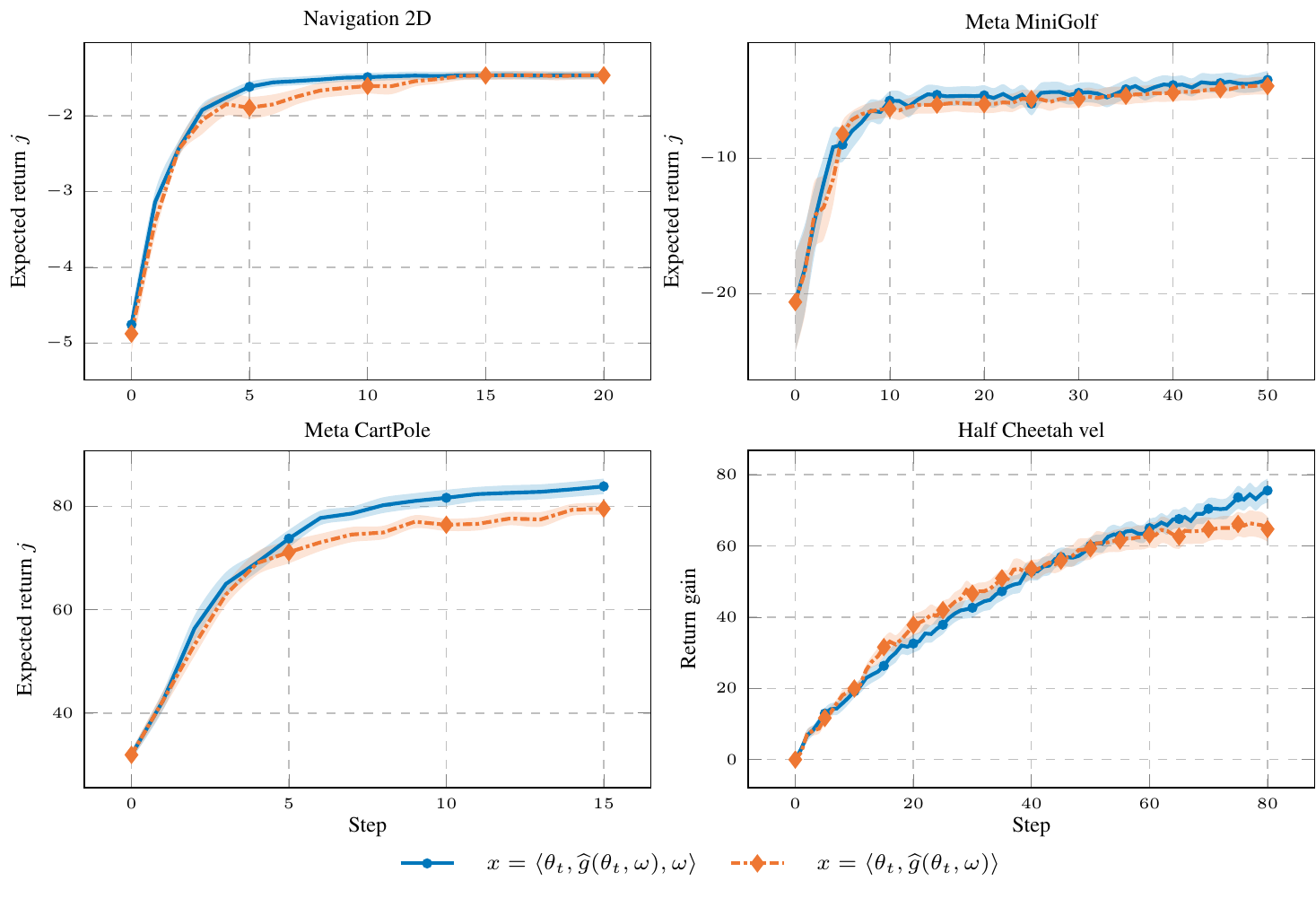}
	\caption{FQI model performance obtained by considering or excluding the explicit task parametrization $\vomega$. (95\% c.i.)}
	\label{fig:task:notask}
\end{figure}

\clearpage
\subsection*{Robustness of FQI regression and effects of Double Q learning.}
In Figure \ref{fig:learning}, we analyzed the results of a FQI model, evaluating the performance under different random policy initializations and tasks. One may wonder if our approach is robust with respect to the randomness included in the ExtraTrees regression. Hence, we trained different FQI models by setting 5 different random states (from 0 to 4), which controls the sampling of the features to apply a split and the draw of the splits. The random state is then equivalent to a a seed for the Extra Trees, and it is applied up to the third FQI iteration. At this point, the models are tested on 20 random task/policy pairs, and their average return gain is taken for each learning step. As we can see in the left plot in Figure \ref{fig:cheetah_seeds}, the $95\%$ confidence interval, that are computed by comparing the different random states, is small enough to claim the robustness of our approach when applied to the Half-Cheetah environment. 

Finally, the right plot in Figure \ref{fig:cheetah_seeds} shows the effects of Clipped Double Q-Learning described in Section \ref{ssec:double}, which is compared with the standard FQI approach with a single Q value function (both are trained with the same number of iterations $N=3$). The latter is still capable of choosing a dynamic learning rate obtaining better results than a fixed step while the former, as expected, provides even better return gains and lower variance over the same set of random test tasks and initial policies.

\begin{figure}[t]
\centering
\includegraphics[width=.9\textwidth]{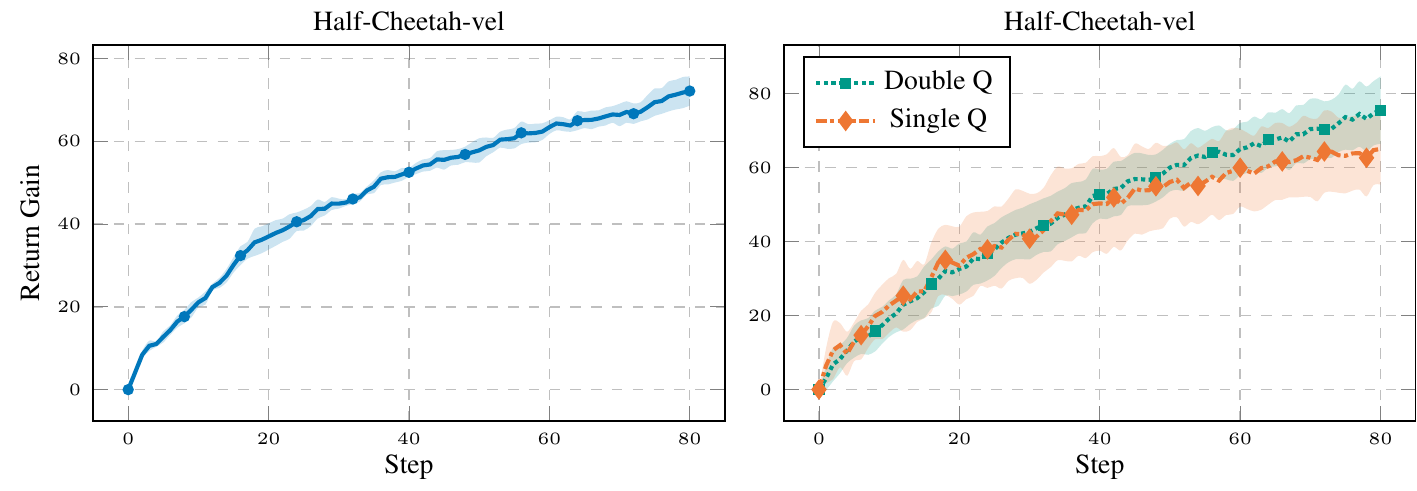}
\caption{On the left: FQI model performance evaluated with 5 different random states (mean $\pm 95\%$ c.i. with respect to the average return gain for each random state). On the right: comparison between the adoption of a single Q and a Double Q-Learning approach for FQI (mean $\pm 95\%$ c.i. with respect to 20 different random test context and initial policies).}
\label{fig:cheetah_seeds}
\end{figure}

\subsection*{Experiments with fixed contexts}
The field of application of metaFQI was presented up to this point as a contextual MDP, with a set of possible tasks. One natural question the reader might wonder is: can we apply this approach to a standard MDP? The answer is trivially positive: we performed some experiments by fixing the task/context and the results are shown in figure \ref{fig:fix_task}. the \textit{metaFQI} curve is related to the performance in test obtained by our agent, and $N$ denotes the iteration selected. Analogously, the NGA curve is related to the best constant learning rate, optimized by means of a grid search. As in the related meta-MDP environments, our approach can outperform the choice of a fixed step size.
\begin{figure}[t]
\centering
\includegraphics[width=.9\textwidth]{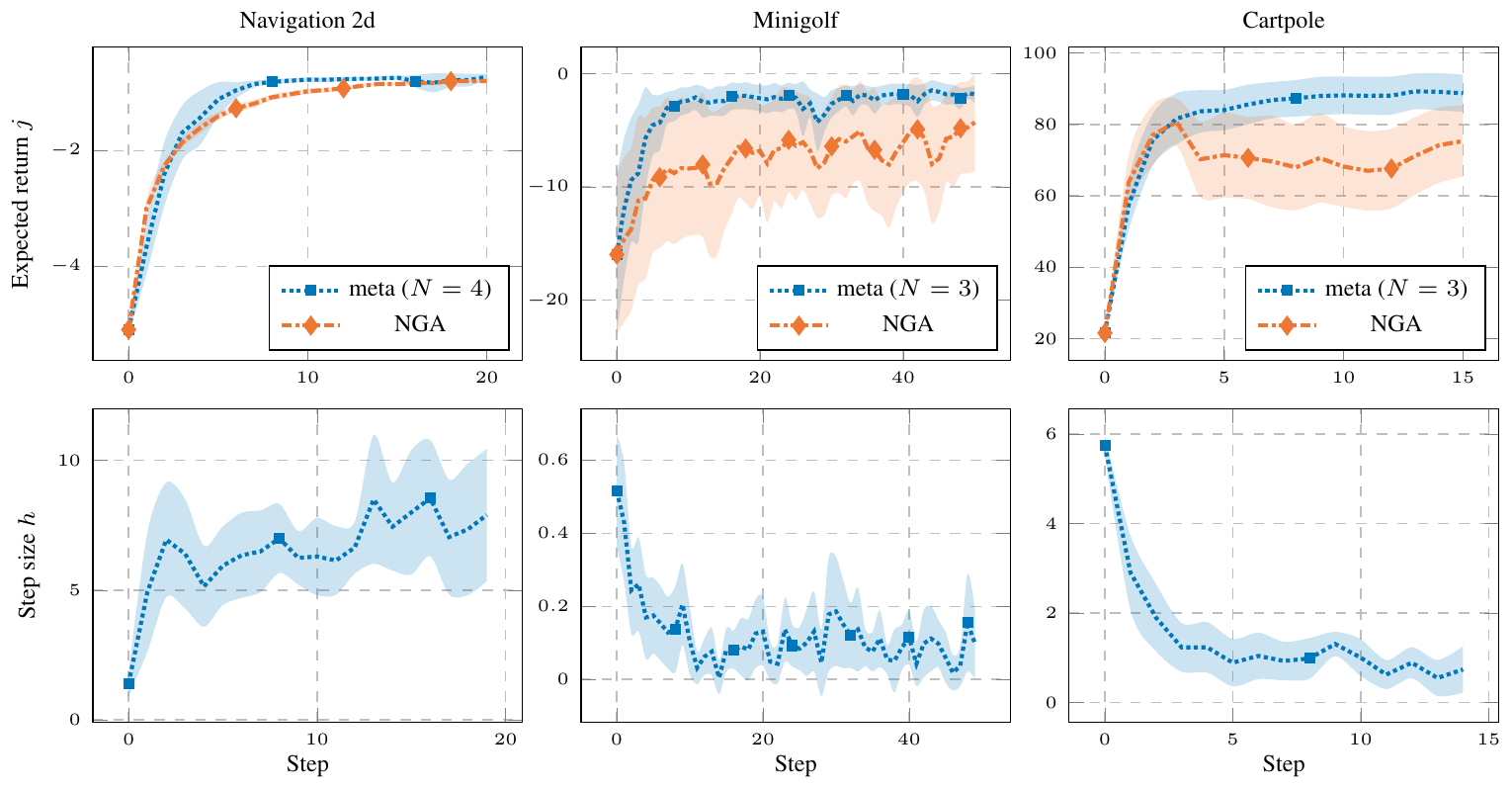}
\caption{FQI model performance and tuned NGA on environments with fix context.}
\label{fig:fix_task}
\end{figure}

\subsection*{Selection of a single learning rate}
One of the most important benefits of the adoption of an adaptive learning rate is provided by an increased number of degrees of freedom w.r.t. the choice of a single learning rate: hence, as an ablation study, we developed an agent capable of choosing only a fixed learning rate in the meta-MDP setting. The learning process has been conducted as follows: at first, we collected a set of trajectories by randomly varying the initial policy and the context, and selecting a random stepsize. Then, we performed a regression (with the same Extra-Trees architecture used for the meta-FQI agent) giving as input the context and the step size $x_i=(\vomega_i, h_i)$, and as output the final performance obtained in the trajectory, said $J_i$. Finally, in the performance evaluation, the sampled test context $\vomega_t$ is given as input to the agent, and the step size selected is then the one that attains the maximum estimated performance $h_t =\argmax_h \widehat{J}_{\vomega_t}(\vtheta_0, h)$. To have a fair comparison, the number of trajectories in the dataset is the same as for the Meta-FQI case. However, this new agent is only interested in the final returns, and not in the whole learning trajectory, hence the amount of data in input is reduced by a factor equal to the learning horizon $H$.
The results are shown in figure \ref{fig:fix_act}, with the curve labeled as \textit{Meta-single-action}. As we can see the agents, even if capable of adapting the stepsize with different contexts, it is still unable to improve the NGA baseline, which has the same step size for each task. This is probably related to the fact that, to provide a fine performance estimation, the model needs a larger amount of samples, while meta-FQI uses all the single steps in the trajectory in the training dataset.

\begin{figure}[t]
\centering
\includegraphics[width=.9\textwidth]{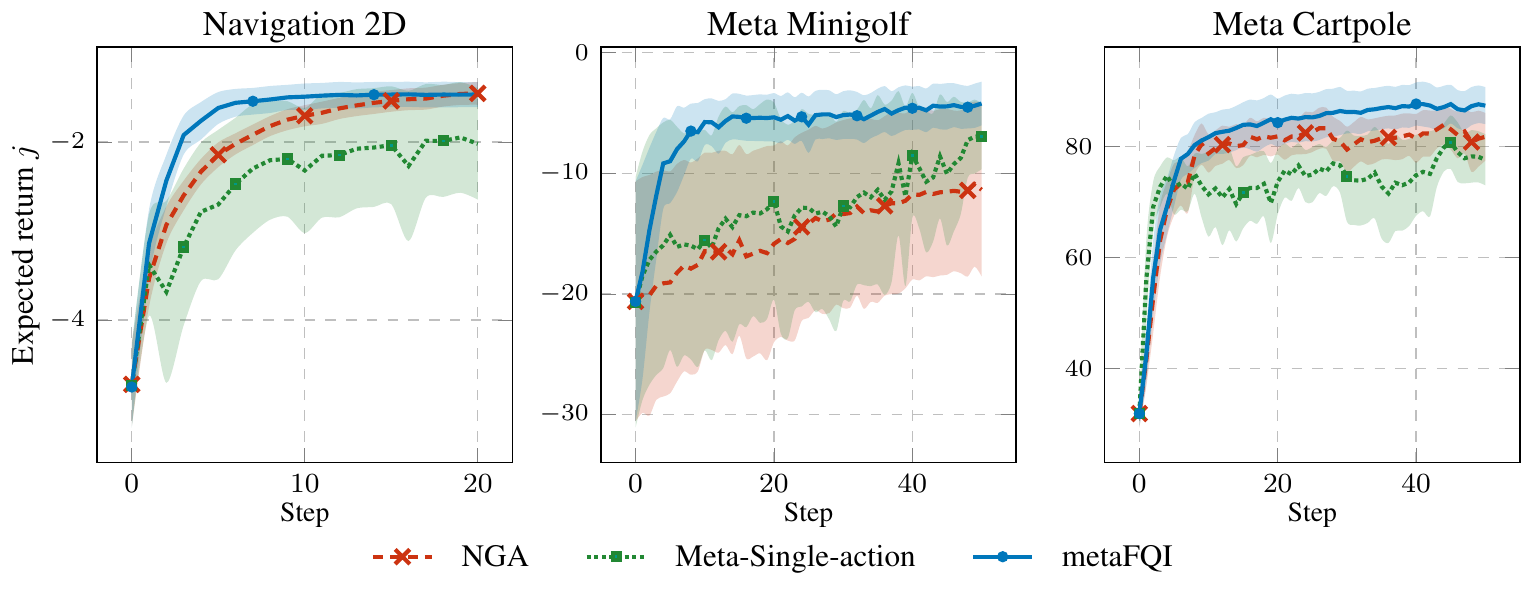}
\caption{Comparison of performances obtained by a trained agent capable of choosing a fixed initial learning rate.}
\label{fig:fix_act}
\end{figure}

\end{document}